\newtheorem{theorem}{Theorem}[section]
\newtheorem{proposition}[theorem]{Proposition}
\newenvironment{derivation}{%
  \par\noindent\textbf{\textit{Derivation}.}
}{%
  \hfill$\qed$\par
}
\def\eqref#1{equation~\ref{#1}}
\def\1{\bm{1}}
\DeclareMathAlphabet{\mathsfit}{\encodingdefault}{\sfdefault}{m}{sl}
\SetMathAlphabet{\mathsfit}{bold}{\encodingdefault}{\sfdefault}{bx}{n}
    \newcommand*{\algrule}[1][\algorithmicindent]{\makebox[#1][l]{\hspace*{.5em}\thealgruleextra\vrule height \thealgruleheight depth \thealgruledepth}}%
\newcommand*{\thealgruleextra}{}
\newcommand*{\thealgruleheight}{.75\baselineskip}
\newcommand*{\thealgruledepth}{.25\baselineskip}
\def\ALG@printindent{%
    \ifnum \theALG@nested>0
        \ifx\ALG@text\ALG@x@notext
        \else
            \unskip
            \addvspace{-1pt}
            \ALG@printindent@tempcnta=1
            \loop
                \algrule[\csname ALG@ind@\the\ALG@printindent@tempcnta\endcsname]%
                \advance \ALG@printindent@tempcnta 1
            \ifnum \ALG@printindent@tempcnta<\numexpr\theALG@nested+1\relax
            \repeat
        \fi
    \fi
    }%
\patchcmd{\ALG@doentity}{\noindent\hskip\ALG@tlm}{\ALG@printindent}{}{\errmessage{failed to patch}}
\newbox\statebox
\newcommand{\myState}[1]{%
    \setbox\statebox=\vbox{#1}%
    \edef\thealgruleheight{\dimexpr \the\ht\statebox+1pt\relax}%
    \edef\thealgruledepth{\dimexpr \the\dp\statebox+1pt\relax}%
    \ifdim\thealgruleheight<.75\baselineskip
        \def\thealgruleheight{\dimexpr .75\baselineskip+1pt\relax}%
    \fi
    \ifdim\thealgruledepth<.25\baselineskip
        \def\thealgruledepth{\dimexpr .25\baselineskip+1pt\relax}%
    \fi
    \State #1%
    \def\thealgruleheight{\dimexpr .75\baselineskip+1pt\relax}%
    \def\thealgruledepth{\dimexpr .25\baselineskip+1pt\relax}%
}
\title{Are you \textbf{SURE}? Enhancing Multimodal Pretraining with Missing Modalities through Uncertainty Estimation}
\author{Duy A. Nguyen \\Siebel School of Computing and Data Science\\
University of Illinois, Urbana-Champaign \\
Illinois, USA \\
\texttt{duyan2@illinois.edu} \\
\And
Quan Huu Do \\College of Engineering and Computer Science\\
VinUniversity\\ 
Hanoi, Vietnam \\
\texttt{quan.dh@vinuni.edu.vn}
\And
Khoa D. Doan \\College of Engineering and Computer Science\\
VinUniversity\\
Hanoi, Vietnam \\
\texttt{khoa.dd@vinuni.edu.vn}
\And
Minh N. Do \\Department of Electrical and Computer Engineering\\
University of Illinois, Urbana-Champaign \\
Illinois, USA \\
\texttt{minhdo@illinois.edu} \\
}
\date{April 2024}
\begin{document}

\maketitle

\begin{abstract}
Multimodal learning has demonstrated incredible successes by integrating diverse data sources, yet it often relies on the availability of all modalities - an assumption that rarely holds in real-world applications. Pretrained multimodal models, while effective, struggle when confronted with small-scale and incomplete datasets (i.e., missing modalities), limiting their practical applicability. Previous studies on reconstructing missing modalities have overlooked the reconstruction's potential unreliability, which could compromise the quality of the final outputs. We present \textbf{SURE} (Scalable Uncertainty and Reconstruction Estimation), a novel framework that extends the capabilities of pretrained multimodal models by introducing latent space reconstruction and uncertainty estimation for both reconstructed modalities and downstream tasks.  Our method is architecture-agnostic, reconstructs missing modalities, and delivers reliable uncertainty estimates, improving both interpretability and performance. SURE introduces a unique Pearson Correlation-based loss and applies statistical error propagation in deep networks for the first time, allowing precise quantification of uncertainties from missing data and model predictions. Extensive experiments across tasks such as sentiment analysis, genre classification, and action recognition show that SURE consistently achieves state-of-the-art performance, ensuring robust predictions even in the presence of incomplete data.

\end{abstract}

\section{Introduction} \label{sec:intro}
\textbf{Motivation.} Multimodal learning has emerged as a powerful paradigm for processing raw data from diverse sources and formats, frequently outperforming unimodal approaches \cite{multimodal_better}. While these frameworks achieve state-of-the-art performance across numerous tasks \cite{sota1,sota2,mmml}, their success often hinges on idealized conditions during training and evaluation, assuming access to complete modalities and large-scale datasets. In real-world settings, such as autonomous vehicles or medical centers, these ideal conditions are rarely met due to incomplete or noisy data.

To tackle data bottleneck with small-scaled datasets, leveraging pretrained models has proven highly effective in unimodal learning \cite{resnet,bert}. This approach, however, remains underexplored in multimodal contexts. To demonstrate its potential, we evaluated a state-of-the-art multimodal fusion model, MMML \cite{mmml}, on the CMU-MOSI dataset \cite{cmu_mosi}. Two versions of the model were compared: one initialized with pretrained weights from the larger CMU-MOSEI dataset \cite{cmu_mosei} and another trained from scratch (i.e. vanilla). Performance was assessed across varying dataset sizes (Figure \ref{fig:pretrain_vanilla}), revealing the clear advantages of pretrained models in terms of both efficiency and effectiveness on small-scale datasets.

A significant challenge, however, lies in the inability of pretrained multimodal frameworks to handle missing modalities (i.e. some samples come with incomplete sets of multimodal inputs) during training or evaluation. While various reconstruction techniques can impute the missing modalities, these methods alone fail to address the critical issue of prediction reliability under such conditions. Incorporating uncertainty estimation, therefore, becomes essential—not only for quantifying the reliability of predictions but also for making informed decisions in scenarios where safety and trustworthiness are paramount (e.g., healthcare and autonomous driving).

To highlight the importance of uncertainty estimation, we conducted experiments on the CMU-MOSI evaluation dataset under various modality-missing conditions. A simple reconstruction mechanism was used to approximate missing modalities, and the model’s performance was evaluated using both Mean Squared Error (MSE) and Estimated Uncertainty (Figure \ref{fig:unc_err}). The results showed that while missing modalities degrade performance (especially when text is absent), the model generates meaningful uncertainty estimates that align with its prediction errors. Higher average uncertainties consistently correspond to higher errors, demonstrating the utility of uncertainty estimation as a proxy for prediction reliability in multimodal settings.
\begin{figure}[t!]
\begin{minipage}{0.46\textwidth}
    \centering
    \includegraphics[width=\columnwidth]{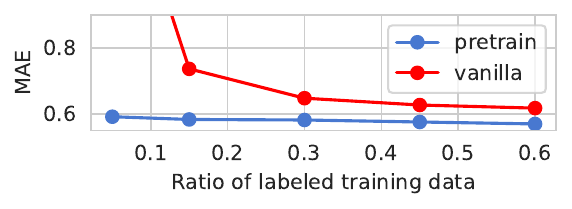}
    \caption{Comparison of pretrained and vanilla MMML framework on CMU-MOSI dataset.}
    \label{fig:pretrain_vanilla}
\end{minipage}
\hfill
\begin{minipage}{0.49\textwidth}
    \centering
    \includegraphics[width=0.8\columnwidth]{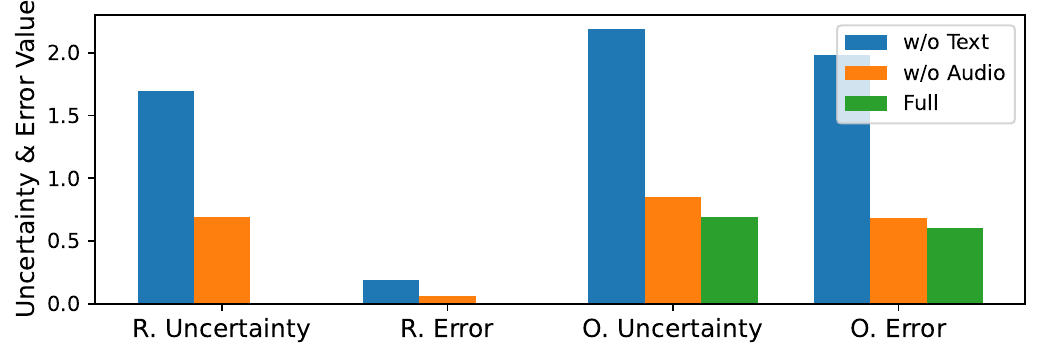}
    \caption{SURE reconstructs missing modalities for final predictions, reporting average errors and uncertainties for both reconstruction (R.) and output (O.).}
    \label{fig:unc_err}
\end{minipage}

\end{figure}

\textbf{Current Literature.} The challenge of missing modalities is a common issue in the training and deployment of multimodal models. Research has primarily addressed this through three approaches: (1) Contrastive loss-based methods, which align latent spaces for cross-modal knowledge transfer \cite{cvpr22,missing_add_1, missing_add_3}; (2) Generative approaches, including VAE-based models \cite{vae_1} and latent space reconstruction \cite{actionmae, missing_add_2}, to approximate missing modalities; and (3) Prompt-based techniques, which utilize trainable prompts to adapt models to various combinations of missing modalities \cite{prompt_1, prompt_3}.
The latter two approaches offer practical solutions for addressing missing modalities and are particularly effective when integrated with pretrained multimodal frameworks. These methods bridge the gap left by missing modalities without requiring extensive modifications to the pretrained models, enabling efficient adaptation to real-world scenarios. However, they overlook the critical need to quantify the uncertainty of reconstructed inputs -- an essential factor for ensuring reliable downstream predictions. 

A growing body of research addresses the second challenge: assessing prediction reliability through uncertainty estimation. Key methods include Bayesian deep learning \cite{wang2019aleatoric,NIPS2017_2650d608}, which models uncertainty directly via output distributions, and post-hoc techniques like Deep Ensembles \cite{deepensemble}, which introduce input perturbations to generate multiple predictions for uncertainty estimation. 
Adapting such approaches for multimodal settings, particularly in conjunction with pretrained frameworks and reconstruction techniques, provides a pathway to more robust and uncertainty-aware models. A more comprehensive review of related works is provided in Appendix \ref{sec:related_works}.

\begin{figure*}[th!]
    \centering
    \includegraphics[width=0.8\textwidth]{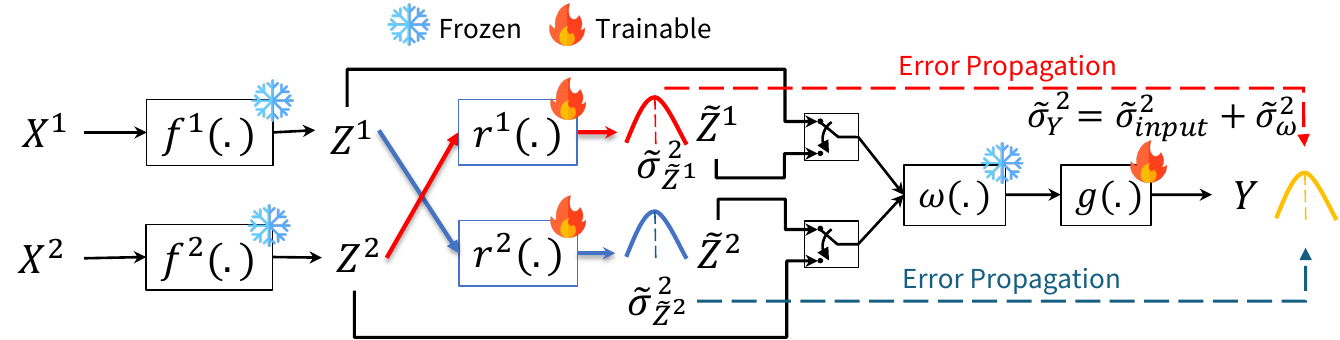}
    \caption{Overview of SURE. Reconstruction modules $r^i(.)$’ are inserted after latent projections $f^i(.)$ and before fusion layers $\omega(.)$ in pretrained multimodal frameworks, reconstructing missing modalities with uncertainties. Reconstructed outputs replace missing inputs while propagating uncertainty, and the final classifier estimates output uncertainty from the model’s inherent variability. The notation $\widetilde{\bullet}$ distinguishes SURE trainable modules' estimates from inputs, outputs, and frozen module's intermediate variables.}
    \label{fig:overview}
\end{figure*}

\textbf{Our approach.} To tackle real-world scenarios with small-scale datasets and missing modalities, we leverage pretrained multimodal frameworks augmented by a simple latent space reconstruction strategy. However, the primary focus of this work is emphasizing the critical need for uncertainty estimation as a central framework enhancement, addressing the challenge of robust multimodal learning in the presence of incomplete data, especially in applications involving \textit{pretrained} multimodal models. We analyze the uncertainties associated with both the reconstruction process of missing modalities and the model's outputs, emphasizing their interdependence and impact on downstream task performance. Specifically, higher reconstruction uncertainty often correlates with higher output uncertainty, which results in greater prediction error (Figure \ref{fig:unc_err}).
To achieve this, three types of uncertainties are investigated:
\begin{itemize}
    \item[(1)] Reconstruction uncertainty for missing modalities.
    \item[(2)] Output uncertainty due to reconstructed inputs.
    \item[(3)] Output uncertainty inherent to the model’s nature.
\end{itemize}
For effective learning of reconstruction uncertainty (1) and aggregated output uncertainty (2 + 3), we introduce a novel loss function based on Pearson Correlation, designed to balance task-specific optimization with uncertainty estimation (Section \ref{sec:method_pearson_loss}). To quantify output uncertainty stemming from reconstructed inputs (2), we adapt Error Propagation \cite{err_prop_1,err_prop_2} for use in deep neural network training, enabling the model to effectively propagate uncertainty from reconstructed inputs to final predictions (Section \ref{sec:method_error_propagation}).
The resulting framework, named SURE (Scalable Uncertainty and Reconstruction Estimation), integrates seamlessly with pretrained multimodal models, providing a scalable and efficient solution for handling missing modalities while generating robust uncertainty estimates. By enabling the model to recognize uncertain predictions and respond with ``I am not sure'' when appropriate, SURE enhances reliability in safety-critical applications. In summary, our key contributions include:
\begin{itemize}
    \item Investigation of the interdependence between reconstruction uncertainty, output reliability, and downstream task performance.
    \item Novel loss function to estimate uncertainty of multimodal models; first adaptation of Error Propagation for Deep Networks.
    \item State-of-the-art results on multiple downstream tasks with accurate uncertainty estimation mechanisms.
\end{itemize}
\section{Proposed Method} 
\label{sec:method}

We propose SURE, a framework designed to handle small-scale datasets with missing modalities by effectively estimating and leveraging uncertainty.
Let $D = \bigcup_i \{(\textbf{x}_i,\textbf{y}_i)\}$ be the training set with pairs from domain $\mathcal{X}=\mathbb{R}^{n_1}\times...\times \mathbb{R}^{n_M}$ and $\mathcal{Y}=\mathbb{R}^k$. Let denote $\textbf{x}_i=(\textbf{x}_i^1,...,\textbf{x}_i^M)$ is the $i^{th}$ input sample where $\textbf{x}^j_i$ is its $j^{th}$ modality. We address scenarios where certain modalities are missing during training or evaluation, requiring the model to operate reliably despite incomplete inputs. The following sections describe our approach to reconstruct missing modalities and estimate uncertainty in such settings.

\subsection{Overview of SURE Framework}
The proposed pipeline is shown in Figure \ref{fig:overview}. For simplicity, we illustrate the SURE pipeline with two modalities, with its extension to $M$ modalities detailed in Appendix \ref{sec:app_m_modals}. Only the reconstruction modules $r^i(.)$, tailored for each modality, and the final classifier head $g(.)$ require training. This design preserves most of the pretrained framework, except for the final classifier head, consistent with standard unimodal pretraining and fine-tuning procedures.
In this study, we preserve the downstream functionality of pretrained frameworks by reconstructing missing modalities while \textbf{prioritizing the reliability of final predictions}, which is the core objective of SURE.

For the first objective, SURE introduces an efficient reconstruction procedure within the shared latent space. Let $Z^i$ denote the latent representation for the $i^{th}$ modality produced by the pretrained framework's unimodal projectors ($f^i(.)$ in Fig.\ref{fig:overview}). To reconstruct a missing modality, SURE incorporates reconstruction modules $r^i(.)$, each designed specifically for a particular modality. The reconstructor for the $i^{th}$ modality, $r^i(.)$, utilizes the latent representation of another available modality (e.g., $Z^j$) to generate an approximation $\Tilde{Z}^i$ when the $i^{th}$ modality is missing, such that $r^i(Z^j) = \Tilde{Z}^i$ ($i=1,\dots, M; j \ne i$). While these modules are simple and efficient, their primary role is to ensure input completeness for downstream tasks. Additional design details and analysis are discussed in Appendix \ref{sec:app_reconstruct}.

The second and more critical objective is to quantify the reliability of both reconstructed inputs and final predictions. To achieve this, SURE designs the reconstruction modules and classifier head to not only generate outputs but also estimate their corresponding uncertainties. Specifically, the reconstruction module $r^i(.)$ provides an estimate of the $i^{th}$ modality representation $\tilde{Z}^i$ along with its reconstruction uncertainty. Similarly, the classifier head makes final predictions while simultaneously capturing the prediction uncertainty arising from the model’s stochastic nature. Since reconstructed inputs (e.g., $\Tilde{Z}^i$) inherently introduce uncertainty into the output, we extend the analysis of output uncertainty by quantifying how reconstruction uncertainty propagates through the entire model. This is achieved using a tailored adaptation of error propagation techniques (Section \ref{sec:method_error_propagation}). To effectively learn uncertainty estimates, we introduce a novel distribution-free loss function (Section \ref{sec:method_pearson_loss}), which circumvents the limitations of conventional approaches and provides a more flexible measure of uncertainty.

Emphasizing uncertainty estimation as the core innovation, SURE ensures robust and interpretable predictions even in scenarios with incomplete data, while latent reconstruction serves as a straightforward but necessary step to leverage pretrained frameworks for downstream tasks.

\subsection{Distribution-free Uncertainty Estimation}
\label{sec:method_pearson_loss}

\textbf{Problem Formulation.} Uncertainty estimation aims to quantify the reliability of model predictions, particularly in our case - scenarios with incomplete data (missing modalities). Given an incomplete input $x_i \in \mathcal{X}$, a model $\phi$ with parameters $\theta$ predicts an output $\Tilde{\mathbf{y}}_i$ along with an associated uncertainty estimate $\tilde{\sigma}^2_i$. The goal is to ensure that $\tilde{\sigma}^2_i$ effectively reflects the true prediction error, e.g. $\tilde{\sigma}^2_i \approx ||\Tilde{\mathbf{y}}_i - \mathbf{y}_i||^2$, where $\mathbf{y}_i$ is the corresponding groundtruth.

\textbf{Conventional Approach.} A common approach to uncertainty modeling is optimizing network parameters to estimate $\mathcal{P}_{\textbf{Y}|\textbf{X}}$, often chosen for its closed-form uncertainty estimation. A common choice for $\mathcal{P}_{\textbf{Y}|\textbf{X}}$ is the heteroscedastic Gaussian \cite{upadhyay2022bayescap, kendall2017uncertainties}, where the model learns both a mean prediction $\Tilde{\mathbf{y}}_i$ and variance $\tilde{\sigma}^2_i$ as parameters of a Gaussian distribution:
$
\{\Tilde{\textbf{y}}_i, \Tilde{\sigma}_i\} :=\phi(\textbf{x}_i; \theta).
$
To estimate these parameters, the standard approach maximizes the likelihood:

\begin{equation} 
\theta^* = \underset{\theta}{\operatorname{argmax}} \prod_{i=1}^N \frac{1}{\sqrt{2 \pi \Tilde{\sigma}_i^2}} \exp\left(-\frac{||\Tilde{\textbf{y}}_i - \textbf{y}_i||^2}{2 \Tilde{\sigma}_i^2}\right), 
\end{equation}

which is equivalent to minimizing NLL loss:

\begin{equation} 
\mathcal{L}_{NLL} = \sum_{i=1}^N \frac{||\Tilde{\mathbf{y}}_i - \mathbf{y}_i||^2}{2 \Tilde{\sigma}_i^2} + \frac{\log(\Tilde{\sigma}_i^2)}{2}. 
\label{eq:gaussian_mle} 
\end{equation}

From this formulation, the uncertainty estimate follows:

\begin{equation} 
\Tilde{\sigma}_i^{2*} = ||\Tilde{\mathbf{y}}_i - \mathbf{y}_i||^2 = \Tilde{\epsilon}_i^2. 
\label{eq:gaussian_solution} 
\end{equation}

\textbf{Limitations of NLL Loss.} Despite its widespread use, NLL loss has several key limitations:
\begin{itemize}
    \itemsep0em
    \item \textbf{Assumption of output distribution} – The method assumes $\mathcal{P}_{\textbf{Y}|\textbf{X}}$ follows a particular distribution (e.g. Gaussian), which may not hold in real-world scenarios, leading to unreliable uncertainty estimates.
    \item \textbf{Ill-defined Gradients for Low Errors} – When prediction error $\Tilde{\epsilon}^2 \rightarrow 0$, the gradient of $\mathcal{L}_{NLL}$ is ill-defined, causing trouble to effectively learn uncertainty (detailed analysis in Appendix \ref{sec:app_gaussian_analyses}).
    \item \textbf{Magnitude Constraint} – The model is constrained to scale uncertainty $\Tilde{\epsilon}^2$ directly with error $\Tilde{\epsilon}^2$ , rather than learning a flexible, robust confidence measure. Ideally, uncertainty should reflect confidence independently of error magnitude. 
\end{itemize}

\textbf{Our method.} To circumvent these limitations, we propose a distribution-free loss function that enforces a strong correlation between uncertainty estimates and actual prediction error, without restrictive magnitude constraints. We achieve this by leveraging the Pearson Correlation Coefficient (PCC):

\begin{equation} 
\mathcal{L}_{PCC}(\Tilde{\sigma}^2, \Tilde{\epsilon}^2) = 1 - r(\Tilde{\sigma}^2, \Tilde{\epsilon}^2), \quad \text{where}
\end{equation} 
\begin{equation} 
r(\Tilde{\sigma}^2, \Tilde{\epsilon}^2) = \frac{\sum_{i=1}^N\left(\Tilde{\sigma}^2_i-\mu_{\sigma^2}\right)\left(\Tilde{\epsilon}^2_i-\mu_{\epsilon^2}\right)} {\sqrt{\sum_{i=1}^N\left(\Tilde{\sigma}^2_i-\mu_{\sigma^2}\right)^2} \sqrt{\sum_{i=1}^N\left(\Tilde{\epsilon}^2_i-\mu_{\epsilon^2}\right)^2}}. 
\end{equation}
Here, $\mu_{\sigma^2}$ and $\mu_{\epsilon^2}$ are the means of $\tilde{\sigma^2}$ and $\tilde{\epsilon^2}$, respectively. 
Since Pearson correlation normalizes covariance, the loss is bounded between 0 and 2. A value of 0 indicates perfect alignment between uncertainty $\Tilde{\sigma}^2$ and error $\Tilde{\epsilon}^2$, while higher values imply weaker or inverse relationships.

\begin{theorem}
Let $\bar{\sigma}_i^2$ and $\bar{\epsilon}_i^2$ be the standardized uncertainty and squared error within a mini-batch:
$
\bar{\sigma}_i^2 = \frac{\Tilde{\sigma}_i^2 - \mu_\sigma}{\sqrt{\frac{1}{N-1}\sum_{j=1}^{N}(\Tilde{\sigma}_j^2-\mu_{\sigma^2})^2}},
\bar{\epsilon}_i^2 = \frac{\Tilde{\epsilon}_i^2 - \mu_{\epsilon^2}}{\sqrt{\frac{1}{N-1}\sum_{j=1}^{N}(\Tilde{\epsilon}_j^2-\mu_{\epsilon^2})^2}}
$. Then, $\mathcal{L}_{PCC}$ is approximately equivalent to the MSE between standardized uncertainty and error:
\begin{equation} 
\frac{1}{2N} \sum_{i=1}^N (\bar{\sigma}_i^2 - \bar{\epsilon}_i^2)^2 \approx \mathcal{L}_{PCC}(\Tilde{\sigma}^2, \Tilde{\epsilon}^2). \end{equation}
\end{theorem}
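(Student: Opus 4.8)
The plan is to expand the left-hand side directly and show that, after using the normalization built into the standardization, every piece collapses into the Pearson coefficient $r(\tilde{\sigma}^2,\tilde{\epsilon}^2)$ up to a multiplicative Bessel-correction factor $(N-1)/N$ that tends to $1$ for large mini-batches; this residual factor is exactly what the word ``approximately'' in the statement refers to.

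First I would record the normalization identity that makes everything work. Since each standardized variable divides the centered value by the sample standard deviation $s_{\sigma^2}=\sqrt{\tfrac{1}{N-1}\sum_{j}(\tilde{\sigma}_j^2-\mu_{\sigma^2})^2}$ (and likewise $s_{\epsilon^2}$ for the error), summing squares gives $\sum_{i}(\bar{\sigma}_i^2)^2=\big(\sum_i(\tilde{\sigma}_i^2-\mu_{\sigma^2})^2\big)/s_{\sigma^2}^2=N-1$, and identically $\sum_{i}(\bar{\epsilon}_i^2)^2=N-1$. Next I would expand the quadratic, $\sum_{i}(\bar{\sigma}_i^2-\bar{\epsilon}_i^2)^2=\sum_i(\bar{\sigma}_i^2)^2-2\sum_i\bar{\sigma}_i^2\bar{\epsilon}_i^2+\sum_i(\bar{\epsilon}_i^2)^2=2(N-1)-2\sum_i\bar{\sigma}_i^2\bar{\epsilon}_i^2$, so the entire task reduces to evaluating the cross term.

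The key step is to identify this cross term with the Pearson coefficient. Rewriting the denominator of $r$ as $\sqrt{\sum_i(\tilde{\sigma}_i^2-\mu_{\sigma^2})^2}\,\sqrt{\sum_i(\tilde{\epsilon}_i^2-\mu_{\epsilon^2})^2}=\sqrt{N-1}\,s_{\sigma^2}\cdot\sqrt{N-1}\,s_{\epsilon^2}=(N-1)\,s_{\sigma^2}s_{\epsilon^2}$, one reads off $r=\tfrac{1}{N-1}\sum_i\bar{\sigma}_i^2\bar{\epsilon}_i^2$, hence $\sum_i\bar{\sigma}_i^2\bar{\epsilon}_i^2=(N-1)\,r$. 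Substituting back, $\tfrac{1}{2N}\sum_i(\bar{\sigma}_i^2-\bar{\epsilon}_i^2)^2=\tfrac{1}{2N}\big(2(N-1)-2(N-1)r\big)=\tfrac{N-1}{N}(1-r)=\tfrac{N-1}{N}\,\mathcal{L}_{PCC}(\tilde{\sigma}^2,\tilde{\epsilon}^2)$, and since $\tfrac{N-1}{N}\to 1$ the claimed approximate identity follows.

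I do not expect a genuine obstacle here, since the argument is elementary algebra; the only point demanding care is bookkeeping of the normalization constants. Specifically, the standardization in the statement carries an $N-1$ inside its variance estimate, whereas the Pearson denominator in the definition of $r$ uses the raw sums $\sqrt{\sum_i(\cdot)^2}$ with no such factor, so I must verify that the $N-1$ from the two standard deviations cancels against the $N-1$ hidden in the Pearson denominator, leaving the clean factor $(N-1)/N$ as the sole source of the approximation. A secondary cosmetic issue is the apparent typo $\mu_\sigma$ versus $\mu_{\sigma^2}$ in the numerator of $\bar{\sigma}_i^2$, which I would read as $\mu_{\sigma^2}$ throughout for consistency with the centering used in $r$.
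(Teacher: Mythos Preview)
Your proposal is correct and follows essentially the same approach as the paper's own proof: expand the square, use the normalization to evaluate $\sum_i(\bar{\sigma}_i^2)^2=\sum_i(\bar{\epsilon}_i^2)^2=N-1$, identify the cross term with $(N-1)r$, and arrive at $\tfrac{N-1}{N}\,\mathcal{L}_{PCC}$. The paper's version is simply a terser one-line display of the same computation, and your remarks about the Bessel factor and the $\mu_\sigma$ typo are both apt.
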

\begin{proof}
    Expanding the LHS:
    $$
    \begin{aligned}
    \frac{1}{2N} \sum_{i=1}^N\left(\bar{\sigma}^2_i-\bar{\epsilon}^2_i\right)^2 
    &=\frac{1}{2N}\left((2N-2)-2 \sum_{i=1}^N \bar{\sigma}^2_i \bar{\epsilon}^2_i\right) \\
    &=\frac{2N-2}{2N}(1-r(\Tilde{\sigma}^2, \Tilde{\epsilon}^2)) \approx \mathcal{L}_{PCC}.
    \end{aligned}
    $$
\end{proof}
\textbf{Key Implications.} Unlike NLL loss, which rigidly forces uncertainty to scale with error, PCC loss enables a more flexible learning of uncertainty while preserving its correlation with error. This relaxed constraint ensures that uncertainty remains a meaningful confidence indicator—predictions with higher uncertainty generally correspond to higher errors, while lower uncertainty is associated with more reliable predictions.
Additionally, we later show in Appendix \ref{sec:app_gaussian_analyses} that our loss function prevents miscalibrated uncertainty in low-error cases by promoting a more stable training process near the optimal solution, which is not the case for ordinary Gaussian NLL loss.

\textbf{Integration with SURE’s Training Objectives.} While $\mathcal{L}_{PCC}$ serves as the core loss for uncertainty estimation, additional loss functions guide reconstruction and downstream task learning. For instance, final loss guiding SURE's reconstruction modules is:
\begin{equation}
    \mathcal{L}_{rec}^i(\Tilde{\mathbf{z}}^i, \Tilde{\sigma}^2_{z^i}) = \frac{1}{N}\sum\|\Tilde{\mathbf{z}}^i - \mathbf{z}^i\|^2 + \lambda \cdot \mathcal{L}_{PCC}(\Tilde{\sigma}^2_{z^i},\Tilde{\epsilon}^2).
\end{equation}
Here, the MSE term guides the learning of reconstructed $\Tilde{\mathbf{z}}^i$, while $\mathcal{L}_{PCC}$ ensures $\sigma^2$ accurately reflects reconstruction error. The weighting factor $\lambda$
balances their contributions. The error $\Tilde{\epsilon}^2$ is defined as MSE for regression tasks (this apply for reconstruction error) or Cross Entropy for classification tasks. A similar loss is used for output uncertainty estimation.  
In parallel, we enhance the estimated output uncertainty by quantifying the uncertainty propagated from the reconstructed input, utilizing Error Propagation through the frozen pretrained network, as detailed in Section \ref{sec:method_error_propagation}.
\subsection{Error Propagation through Deep Networks}

\label{sec:method_error_propagation}
While the classifier head quantifies output uncertainty from the stochastic nature of pretrained frameworks, a strategy is still required to account for uncertainty introduced by reconstructed inputs. Drawing from the Error Propagation formula \cite{err_prop_1, err_prop_2}, a fundamental principle in scientific measurement and data analysis used to quantify how input uncertainties affect derived quantities, we establish the following key result:
\begin{proposition}
\label{prop:error_propagate}
 Let $\omega(\{Z^i\}_{i \in \mathcal{I}},\{\Tilde{Z}^j\}_{j \in \mathcal{J}})$ be the pretrained model in SURE, where $\mathcal{I}$ and $\mathcal{J}$ represent the sets of indices for which $Z^i$ is available or unavailable, respectively. If each reconstructed modality $\tilde{Z}^i$ has associated uncertainty $\tilde{\sigma}_{\Tilde{Z}_i}^2$, then the total uncertainty propagated from reconstructed inputs is:
  \begin{equation}
      \Tilde{\sigma}_{input}^2 \approx \sum_{i \in \mathcal{J}}\left(\frac{\partial \omega}{\partial \Tilde{Z}_i}\right)^2 \tilde{\sigma}_{\Tilde{Z}_i}^2.
 \end{equation}
 \end{proposition}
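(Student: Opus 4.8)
The plan is to treat this error propagation statement as a first-order (delta-method) variance computation, specializing the classical error-propagation formula to the frozen fusion map $\omega$. First I would reframe the reconstructed modalities as random quantities: each $\tilde{Z}^j$ ($j \in \mathcal{J}$) is modeled as a random variable whose mean is the point estimate produced by $r^j(\cdot)$ and whose variance is the learned reconstruction uncertainty $\tilde{\sigma}_{\tilde{Z}_j}^2$, while the genuinely observed modalities $\{Z^i\}_{i \in \mathcal{I}}$ are held fixed. Under this view the output $\omega(\{Z^i\}_{i \in \mathcal{I}},\{\tilde{Z}^j\}_{j \in \mathcal{J}})$ is itself a random variable, and the quantity to characterize, $\tilde{\sigma}_{input}^2$, is precisely its variance $\Var(\omega)$ induced by the input fluctuations.

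The core step is a first-order Taylor expansion of $\omega$ about the mean of the reconstructed inputs. Writing $\delta_j = \tilde{Z}^j - \E[\tilde{Z}^j]$ for the zero-mean reconstruction fluctuations, I would expand
\begin{equation}
\omega \approx \omega_0 + \sum_{j \in \mathcal{J}} \frac{\partial \omega}{\partial \tilde{Z}_j}\, \delta_j,
\end{equation}
where $\omega_0$ denotes $\omega$ evaluated at the mean reconstructed inputs, the derivatives are taken at that same point, and the available inputs contribute no fluctuation. The zeroth-order term $\omega_0$ is the deterministic prediction and drops out under the variance operator, so all stochasticity is carried by the linear term.

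Next I would compute the variance of the linearized expression. Expanding the square produces diagonal contributions $\bigl(\partial \omega / \partial \tilde{Z}_j\bigr)^2 \Var(\delta_j) = \bigl(\partial \omega / \partial \tilde{Z}_j\bigr)^2 \tilde{\sigma}_{\tilde{Z}_j}^2$, together with off-diagonal cross terms proportional to $\Cov(\delta_j,\delta_{j'})$ for $j \neq j'$. Invoking the modeling assumption that reconstruction errors across distinct missing modalities are (approximately) uncorrelated — each $r^j$ being trained independently on a separate source representation — the covariances vanish, leaving exactly $\tilde{\sigma}_{input}^2 \approx \sum_{j \in \mathcal{J}} \bigl(\partial \omega / \partial \tilde{Z}_j\bigr)^2 \tilde{\sigma}_{\tilde{Z}_j}^2$, which is the claimed identity.

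The main obstacle, and the reason for the $\approx$ rather than $=$, is justifying the truncation of the Taylor series: the formula is exact only for an affine $\omega$, so I would argue that the neglected second- and higher-order terms are negligible whenever the reconstruction uncertainties are small relative to the curvature of $\omega$ along the missing-modality directions. A secondary subtlety is notational: since $\tilde{Z}^j$ is vector-valued and $\omega$ may be vector-valued, $\partial \omega / \partial \tilde{Z}_j$ is really a Jacobian, so the squared-derivative-times-variance term should be read either componentwise or as the diagonal of the full propagated covariance $J \mSigma J^\top$; I would make explicit which scalar summary is intended, and note that this Jacobian is evaluated through the frozen pretrained network by standard backpropagation, which is exactly what renders $\tilde{\sigma}_{input}^2$ computable during training.
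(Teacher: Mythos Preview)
Your proposal is correct and follows essentially the same approach as the paper: the paper's derivation simply invokes the classical error-propagation formula $\sigma_f^2 \approx \sum_i (\partial f/\partial A_i)^2 \sigma_{A_i}^2$ as a given, notes that the available modalities contribute no uncertainty, and specializes it to $\omega$. You are more explicit---actually deriving that formula via a first-order Taylor (delta-method) expansion and spelling out the independence assumption needed to drop the cross terms---but the underlying argument is identical.
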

\begin{derivation}
The total uncertainty in a function $f(A_1, A_2,\dots, A_n)$ with $n$ variables, each attached with uncertainty $\sigma^2_{A_i}$ ($i=1,\dots,n$), is given by $\sigma^2_f$ and follows:
\begin{equation}
     \sigma_f^2 \approx \sum_{i=1}^n\left(\frac{\partial f}{\partial A_i}\right)^2 \sigma_{A_i}^2,
 \end{equation}
 Applying this to SURE model:
\begin{itemize}
    \itemsep0em
    \item The prediction function $\omega(.)$ depends on both available inputs $Z^i$ and reconstructed inputs $\Tilde{Z}_i$.
    \item Disregard the uncertainty of available modalities, the uncertainty in reconstructed inputs $\Tilde{Z}_i$ propagates to the final output via its partial derivative.
\end{itemize}
Thus, we obtain the result in Proposition \ref{prop:error_propagate}.
\end{derivation}
 Combining $\Tilde{\sigma}_{input}^2$ with the uncertainty stem from $\omega(.)$ stochastic nature, denoted by $\Tilde{\sigma}_{\omega}$ (quantified at SURE's classifier head), we achieve the final output uncertainty:
 \begin{equation}
     \Tilde{\sigma}^2_Y = \Tilde{\sigma}_{input}^2 + \Tilde{\sigma}_{\omega}^2.
 \end{equation}
 This combination follows the Pythagorean theorem for variances, which assumes the aggregated output uncertainty are caused by individual sources \cite{pytago_uncert}: \emph{input-induced uncertainty} (from missing modalities) and \emph{model-intrinsic uncertainty} (from network stochasticity). 
 Lastly, $\Tilde{\sigma}^2_Y$ is learnt to reflect prediction error via $\mathcal{L}_{PCC}$ discussed in Section \ref{sec:method_pearson_loss}.

\textbf{Training process.} In the first phase, reconstruction modules are trained with $\mathcal{L}_{rec}$ using one modality as ground truth and others as input. In the second phase, reconstruction modules are frozen, and the classifier head is trained with $\mathcal{L}_{PCC}$ and $\mathcal{L}_{downstream}$. Detailed training steps are in Algorithm \ref{alg:training} (Appendix \ref{sec:app_m_modals}).

\section{Experiments} \label{sec:experiments}
\subsection{Datasets and Metrics}
We integrate SURE into three pretrained multimodal frameworks and adapt them to smaller-scale datasets with missing modalities during training and evaluation. Detailed integration settings are provided in Appendix \ref{sec:app_settings}.

\textbf{Sentiment Analysis.} For this task, we use MMML \cite{mmml}, a state-of-the-art architecture pretrained on CMU-MOSEI \cite{cmu_mosei}, and fine-tune it on CMU-MOSI \cite{cmu_mosi} with missing modalities. 

\textbf{Book Genre Classification.} SURE is coupled with MMBT \cite{mmbt}, pretrained on MM-IMDB \cite{mm_imdb}, and fine-tune it for book genre classification with text and image data from \cite{book_dts}. 

\textbf{Human Action Recognition.} We use the HAMLET framework \cite{hamlet}, pretrained on MMAct \cite{mmact_dts}, and fine-tune it on UTD-MHAD \cite{utd_mhad_dts} with missing modalities. 

Uncertainty quality is evaluated using Uncertainty Calibration Error (UCE) \cite{uce} and Pearson Correlation Coefficient (PCC) \cite{upadhyay2022bayescap}, which measure alignment between predictive error and uncertainty.

\subsection{Baselines and Experiment Details}
For main comparison, we include reconstruction methods (ActionMAE \cite{actionmae}, DiCMoR \cite{dicmor}, IMDer \cite{imder}) and uncertainty estimation methods (Gaussian Maximum Likelihood \cite{g_mle_1, g_mle_2}, Monte Carlo Dropout \cite{do_1, do_2, do_3}, and Ensemble Learning \cite{deepensemble}). Baselines use the same pretrained models as SURE, and uncertainty methods are tested with a deterministic SURE* variant using MSE loss instead of $\mathcal{L}_{rec}$. Training datasets mask $50\%$ of each modality’s samples, with distinct masks across modalities. Additional baselines and results are discussed in Appendix \ref{sec:app_exps}.

\begin{table}[th]
\centering
\caption{Results of different approaches on CMU-MOSI Dataset.}
\label{tab:mosi}
\resizebox{\textwidth}{!}{%
\begin{tabular}{@{}lccccccccccccccccc@{}}
\toprule
\textbf{Model}      & \multicolumn{3}{c}{\textbf{MAE}}                                                                                      & \multicolumn{3}{c}{\textbf{Corr}}                                                                           & \multicolumn{3}{c}{\textbf{F1}}                                                                                       & \multicolumn{3}{c}{\textbf{Acc}}                                                                                    & \multicolumn{2}{c}{\textbf{\begin{tabular}[c]{@{}c@{}}Reconstruct \\ Uncertainty Corr\end{tabular}}} & \multicolumn{3}{c}{\textbf{\begin{tabular}[c]{@{}c@{}}Output \\ Uncertainty Corr\end{tabular}}}                      \\ \midrule
                    & T(ext)                                & A(udio)                               & F(ull)                                & T                                     & A                                    & F                            & T                                     & A                                     & F                                     & T                                     & A                                     & F                                   & T                                                 & A                                                & T                                     & A                                    & F                                     \\ \midrule
\multicolumn{18}{l}{\textit{Modality Reconstruction Techniques:}}                                                                                                                                                                                                                                                                                                                                                                                                                                                                                                                                                                                                                                                                     \\
ActionMAE           & 1.106                                 & 2.146                                 & 1.005                                 & 0.506                                 & 0.155                                & 0.517                        & 0.717                                 & 0.57                                  & 0.719                                 & 0.724                                 & 0.423                                 & 0.725                               & -                                                 & -                                                & -                                     & -                                    & -                                     \\
DiCMoR              & 0.811                                 & 1.227                                 & 1.106                                 & 0.783                                 & 0.427                                & 0.537                        & 0.854                                 & 0.57                                  & 0.65                                  & 0.856                                 & 0.585                                 & 0.654                               & -                                                 & -                                                & -                                     & -                                    & -                                     \\
IMDer               & 0.707                                 & 1.237                                 & 1.106                                 & 0.797                                 & 0.438                                & 0.544                        & 0.846                                 & 0.524                                 & 0.62                                  & 0.846                                 & 0.564                                 & 0.634                               & -                                                 & -                                                & -                                     & -                                    & -                                     \\ \midrule
\multicolumn{18}{l}{\textit{Uncertainty Estimation Techniques:}}                                                                                                                                                                                                                                                                                                                                                                                                                                                                                                                                                                                                                                                                      \\
\begin{tabular}[c]{@{}l@{}}SURE + \\ Gaussian MLE\end{tabular} & {\color[HTML]{FF0000} \textbf{0.589}} & {\color[HTML]{0000FF} 1.133}          & {\color[HTML]{FF0000} \textbf{0.581}} & {\color[HTML]{0000FF} 0.866}          & 0.53                                 & {\color[HTML]{0000FF} 0.871} & 0.88                                  & 0.676                                 & 0.885                                 & 0.879                                 & 0.678                                 & 0.882                               & {\color[HTML]{0000FF} 0.103}                      & 0.013                                            & {\color[HTML]{0000FF} 0.067}          & 0.032                                & 0.059                                 \\
\begin{tabular}[c]{@{}l@{}}SURE + \\ MC DropOut\end{tabular}   & 0.63                                  & 1.153                                 & 0.622                                 & 0.858                                 & 0.556                                & 0.865                        & 0.877                                 & {\color[HTML]{0000FF} 0.686}          & {\color[HTML]{FF0000} \textbf{0.899}} & 0.876                                 & {\color[HTML]{0000FF} 0.684}          & {\color[HTML]{FF0000} \textbf{0.9}} & 0.047                                             & 0.008                                            & 0.013                                 & 0.009                                & {\color[HTML]{0000FF} 0.13}           \\
\begin{tabular}[c]{@{}l@{}}SURE + \\ DeepEnsemble\end{tabular} & {\color[HTML]{0000FF} 0.592}          & {\color[HTML]{FF0000} \textbf{1.071}} & {\color[HTML]{0000FF} 0.582}          & {\color[HTML]{FF0000} \textbf{0.868}} & {\color[HTML]{FF0000} \textbf{0.58}} & {\color[HTML]{0000FF} 0.871} & {\color[HTML]{0000FF} 0.886}          & {\color[HTML]{FF0000} \textbf{0.714}} & 0.889                                 & {\color[HTML]{0000FF} 0.885}          & {\color[HTML]{FF0000} \textbf{0.716}} & 0.888                               & 0.062                                             & {\color[HTML]{0000FF} 0.031}                     & 0.024                                 & {\color[HTML]{0000FF} 0.074}         & 0.082                                 \\ \midrule
\textbf{SURE}                & 0.602                                 & 1.148                                 & 0.583                                 & 0.865                                 & {\color[HTML]{0000FF} 0.557}         & 0.869                        & {\color[HTML]{FF0000} \textbf{0.896}} & 0.685                                 & {\color[HTML]{0000FF} 0.891}          & {\color[HTML]{FF0000} \textbf{0.894}} & {\color[HTML]{0000FF} 0.684}          & {\color[HTML]{0000FF} 0.89}         & {\color[HTML]{FF0000} \textbf{0.739}}             & {\color[HTML]{FF0000} \textbf{0.732}}            & {\color[HTML]{FF0000} \textbf{0.381}} & {\color[HTML]{FF0000} \textbf{0.18}} & {\color[HTML]{FF0000} \textbf{0.485}} \\ \bottomrule
\end{tabular}
}
\end{table}
\begin{table}[th]
\centering
\caption{Results of different approaches on Book Dataset.}
\label{tab:book}
\resizebox{0.9\textwidth}{!}{%
\begin{tabular}{@{}lccccccccccc@{}}
\toprule
\textbf{Model}      & \multicolumn{3}{c}{\textbf{F1}}                                                                                       & \multicolumn{3}{c}{\textbf{Acc}}                                                                                      & \multicolumn{2}{c}{\textbf{\begin{tabular}[c]{@{}c@{}}Reconstruct \\ Uncertainty Corr\end{tabular}}} & \multicolumn{3}{c}{\textbf{\begin{tabular}[c]{@{}c@{}}Output \\ Uncertainty Corr\end{tabular}}}                       \\ \midrule
                    & T(ext)                                & I(mage)                               & F(ull)                                & T                                     & I                                     & F                                     & T                                                 & I                                                & T                                     & I                                     & F                                     \\ \midrule
\multicolumn{12}{l}{\textit{Modality Reconstruction Techniques:}}                                                                                                                                                                                                                                                                                                                                                                                                                                            \\
ActionMAE           & 0.277                                 & 0.271                                 & 0.35                                  & 0.186                                 & 0.166                                 & 0.311                                 & -                                                 & -                                                & -                                     & -                                     & -                                     \\
DiCMoR              & 0.202                                 & {\color[HTML]{0000FF} 0.465}          & 0.467                                 & 0.152                                 & {\color[HTML]{0000FF} 0.452}          & 0.454                                 & -                                                 & -                                                & -                                     & -                                     & -                                     \\
IMDer               & 0.204                                 & 0.376                                 & 0.374                                 & 0.155                                 & 0.368                                 & 0.367                                 & -                                                 & -                                                & -                                     & -                                     & -                                     \\ \midrule
\multicolumn{12}{l}{\textit{Uncertainty Estimation Techniques: }}                                                                                                                                                                                                                                                                                                                                                                                                                                           \\
SURE + Gaussian MLE & 0.676                                 & 0.238                                 & {\color[HTML]{0000FF} 0.685}          & 0.665                                 & 0.233                                 & 0.672                                 & 0.137                                             & 0.233                                            & {\color[HTML]{0000FF} 0.358}          & {\color[HTML]{0000FF} 0.349}          & {\color[HTML]{0000FF} 0.468}          \\
SURE + MC DropOut   & 0.653                                 & {\color[HTML]{FF0000} \textbf{0.491}} & 0.669                                 & 0.65                                  & {\color[HTML]{FF0000} \textbf{0.466}} & 0.658                                 & {\color[HTML]{0000FF} 0.243}                      & {\color[HTML]{0000FF} 0.334}                     & 0.174                                 & 0.186                                 & 0.41                                  \\
SURE + DeepEnsemble & {\color[HTML]{0000FF} 0.682}          & 0.327                                 & 0.684                                 & {\color[HTML]{FF0000} \textbf{0.673}} & 0.31                                  & {\color[HTML]{0000FF} 0.673}          & 0.128                                             & 0.135                                            & 0.144                                 & 0.214                                 & 0.227                                 \\ \midrule
\textbf{SURE}       & {\color[HTML]{FF0000} \textbf{0.683}} & 0.413                                 & {\color[HTML]{FF0000} \textbf{0.696}} & {\color[HTML]{0000FF} 0.671}          & 0.401                                 & {\color[HTML]{FF0000} \textbf{0.688}} & {\color[HTML]{FF0000} \textbf{0.637}}             & {\color[HTML]{FF0000} \textbf{0.833}}            & {\color[HTML]{FF0000} \textbf{0.373}} & {\color[HTML]{FF0000} \textbf{0.481}} & {\color[HTML]{FF0000} \textbf{0.474}} \\ \bottomrule
\end{tabular}
}
\end{table}
\subsection{Main Results}
Results show pipeline performance with unimodal or full inputs, averaged over three runs. Best and second-best metrics are highlighted in \textbf{\textcolor{red}{red}} and \textcolor{blue}{blue}, respectively.

\textbf{Sentiment Analysis.}
The results for the CMU-MOSI dataset are summarized in Table \ref{tab:mosi}. SURE and its variations consistently outperform recent reconstruction techniques, highlighting their effectiveness in handling missing modalities. SURE’s ability to reconstruct missing data on the fly during training allows every sample to be fully utilized, leading to improved final outputs. Among the modalities, audio appears to be less effective for the downstream task. All methods perform better when text is available compared to when only audio is used, and uncertainty estimation also declines when relying solely on audio.

\textbf{Book Genre Classification.}
Similar to the sentiment analysis task, SURE outperforms recent reconstruction techniques in this classification task (Table \ref{tab:book}), showing a stronger correlation between uncertainty and error for both reconstruction and downstream tasks. In the Book Dataset, the text modality proves to be highly effective for the downstream task, but it contributes less to uncertainty estimation for both reconstruction and downstream tasks.

\textbf{Human Action Recognition.}
As suggested in Table \ref{tab:utd_mhad}, SURE consistently delivers the best performance on downstream tasks across all scenarios. Output uncertainty most closely reflects actual error when the Watch Accel modality is available. However, we observe that a modality effective for downstream task performance may not always contribute equally to uncertainty estimation. This is likely due to the independent nature of error distributions across different modality combinations, which leads to a divergence between downstream task performance and uncertainty estimation. Extended report with every input modalities combination is presented in Appendix \ref{sec:app_missing}.
\begin{table}[th]
\centering
\caption{Results of different approaches on UTD-MHAD Dataset.}
\label{tab:utd_mhad}
\resizebox{\textwidth}{!}{%
\begin{tabular}{@{}lccccccccccccccc@{}}
\toprule
\textbf{Model}                                                 & \multicolumn{4}{c}{\textbf{F1}}                                                                                                                              & \multicolumn{4}{c}{\textbf{Acc}}                                                                                                                             & \multicolumn{3}{c}{\textbf{\begin{tabular}[c]{@{}c@{}}Reconstruct \\ Uncertainty Corr\end{tabular}}}                  & \multicolumn{4}{c}{\textbf{\begin{tabular}[c]{@{}c@{}}Output \\ Uncertainty Corr\end{tabular}}}                                                              \\ \midrule
                                                               & V(ideo)                              & A(ccel)                               & G(yro)                                & F(ull)                                & V                                     & A                                     & G                                     & F                                    & V                                     & A                                     & G                                     & V                                     & A                                    & G                                     & F                                     \\ \midrule
\multicolumn{16}{l}{\textit{Modality Reconstruction Techniques:}}                                                                                                                                                                                                                                                                                                                                                                                                                                                                                                                                                                                                                   \\
ActionMAE                                                      & 0.044                                & 0.204                                 & 0.303                                 & 0.531                                 & 0.059                                 & 0.231                                 & 0.311                                 & 0.537                                & -                                     & -                                     & -                                     & -                                     & -                                    & -                                     & -                                     \\
DiCMoR                                                         & 0.069                                & {\color[HTML]{FF0000} \textbf{0.473}} & 0.52                                  & 0.653                                 & 0.033                                 & 0.408                                 & 0.472                                 & 0.636                                & -                                     & -                                     & -                                     & -                                     & -                                    & -                                     & -                                     \\
IMDer                                                          & 0.089                                & 0.157                                 & 0.141                                 & 0.687                                 & 0.069                                 & 0.158                                 & 0.145                                 & 0.689                                & -                                     & -                                     & -                                     & -                                     & -                                    & -                                     & -                                     \\ \midrule
\multicolumn{16}{l}{\textit{Uncertainty Estimation Techniques:}}                                                                                                                                                                                                                                                                                                                                                                                                                                                                                                                                                                                                                    \\
\begin{tabular}[c]{@{}l@{}}SURE + \\ Gaussian MLE\end{tabular} & 0.116                                & 0.433                                 & 0.468                                 & 0.693                                 & 0.074                                 & 0.381                                 & 0.387                                 & 0.651                                & 0.166                                 & 0.115                                 & 0.056                                 & 0.122                                 & 0.476                                & 0.147                                 & 0.292                                 \\
\begin{tabular}[c]{@{}l@{}}SURE + \\ MC DropOut\end{tabular}   & 0.156                                & {\color[HTML]{FF0000} \textbf{0.473}} & {\color[HTML]{0000FF} 0.595}          & {\color[HTML]{FF0000} \textbf{0.739}} & 0.09                                  & 0.404                                 & 0.571                                 & 0.718                                & 0.122                                 & 0.135                                 & {\color[HTML]{0000FF} 0.171}          & {\color[HTML]{0000FF} 0.136}          & {\color[HTML]{0000FF} 0.486}         & 0.223                                 & {\color[HTML]{0000FF} 0.512}          \\
\begin{tabular}[c]{@{}l@{}}SURE + \\ DeepEnsemble\end{tabular} & {\color[HTML]{FF0000} \textbf{0.25}} & 0.468                                 & 0.593                                 & 0.737                                 & {\color[HTML]{FF0000} \textbf{0.207}} & {\color[HTML]{FF0000} \textbf{0.453}} & {\color[HTML]{FF0000} \textbf{0.604}} & {\color[HTML]{0000FF} 0.735}         & {\color[HTML]{0000FF} 0.249}          & {\color[HTML]{0000FF} 0.175}          & 0.122                                 & 0.126                                 & 0.421                                & {\color[HTML]{FF0000} \textbf{0.436}} & 0.481                                 \\ \midrule
\textbf{SURE}                                                  & {\color[HTML]{0000FF} 0.161}         & 0.462                                 & {\color[HTML]{FF0000} \textbf{0.607}} & {\color[HTML]{FF0000} \textbf{0.739}} & {\color[HTML]{0000FF} 0.121}          & {\color[HTML]{0000FF} 0.431}          & {\color[HTML]{0000FF} 0.59}           & {\color[HTML]{FF0000} \textbf{0.74}} & {\color[HTML]{FF0000} \textbf{0.878}} & {\color[HTML]{FF0000} \textbf{0.837}} & {\color[HTML]{FF0000} \textbf{0.863}} & {\color[HTML]{FF0000} \textbf{0.226}} & {\color[HTML]{FF0000} \textbf{0.53}} & {\color[HTML]{0000FF} 0.306}          & {\color[HTML]{FF0000} \textbf{0.568}} \\ \bottomrule
\end{tabular}
}
\end{table}

\textbf{Summary.}
Compared to other methods, our $\mathcal{L}_{PCC}$ loss avoids strict magnitude constraints, enabling efficient uncertainty learning and accurate error capture from inputs and model stochasticity. Metrics show strong uncertainty-error correlation in reconstruction and downstream tasks, validating SURE's effectiveness.
\section{Analyses} \label{sec:analyses}
\begin{table}[th]
\centering
\captionof{table}{Results of different SURE's variations of SURE on UTD-MHAD Dataset.}
    \label{tab:ablation_utd_mhad}
\resizebox{0.6\textwidth}{!}{%
\begin{tabular}{@{}lclcccc@{}}
\toprule
                                                  & \textbf{Model}         &             & \multicolumn{1}{l}{\textbf{F1}}       & \multicolumn{1}{l}{\textbf{Acc}}     & \multicolumn{1}{l}{\textbf{\begin{tabular}[c]{@{}l@{}}Reconstruct \\ Uncertainty Corr\end{tabular}}} & \multicolumn{1}{l}{\textbf{\begin{tabular}[c]{@{}l@{}}Output \\ Uncertainty Corr\end{tabular}}} \\ \midrule
                                                  & (1a)                   & Full        & 0.151                                 & 0.098                                & -                                                                                                    & 0.124                                                                                           \\ \cmidrule(l){2-7} 
                                                  &                        & Video       & 0.095                                 & 0.094                                & -                                                                                                    & 0.128                                                                                           \\
                                                  &                        & Accel & 0.059                                 & 0.081                                & -                                                                                                    & 0.322                                                                                           \\
                                                  &                        & Gyro  & 0.408                                 & 0.413                                & -                                                                                                    & 0.122                                                                                           \\
\multirow{-5}{*}{\rotatebox[origin=c]{90}{\begin{tabular}[c]{@{}l@{}}Recontruct \\ Ablation\end{tabular}}}         & \multirow{-4}{*}{(1b)} & Full        & 0.519                                 & 0.525                                & -                                                                                                    & 0.524                                                                                           \\ \midrule
                                                  &                        & Video       & 0.173                                 & 0.117                                & -                                                                                                    & -                                                                                               \\
                                                  &                        & Accel & 0.479                                 & 0.427                                & -                                                                                                    & -                                                                                               \\
                                                  &                        & Gyro  & 0.589                                 & 0.571                                & -                                                                                                    & -                                                                                               \\
                                                  & \multirow{-4}{*}{(2a)} & Full        & \textcolor{blue}{ 0.736}          & \textcolor{blue}{ 0.727}         & -                                                                                                    & -                                                                                               \\ \cmidrule(l){2-7} 
                                                  &                        & Video       & 0.15                                  & 0.113                                & -                                                                                                    & 0.159                                                                                           \\
                                                  &                        & Accel & 0.456                                 & 0.489                                & -                                                                                                    & 0.489                                                                                           \\
                                                  &                        & Gyro  & 0.512                                 & 0.462                                & -                                                                                                    & 0.237                                                                                           \\
\multirow{-8}{*}{\rotatebox[origin=c]{90}{\begin{tabular}[c]{@{}l@{}}Uncertainty \\ Est. Ablation\end{tabular}}} & \multirow{-4}{*}{(2b)} & Full        & 0.637                                 & 0.593                                & -                                                                                                    & 0.511                                                                                           \\ \midrule
                                                  &                        & Video       & 0.031                                 & 0.005                                & 0.684                                                                                                & 0.026                                                                                           \\
                                                  &                        & Accel & 0.226                                 & 0.237                                & 0.675                                                                                                & 0.441                                                                                           \\
                                                  &                        & Gyro  & 0.434                                 & 0.418                                & 0.68                                                                                                 & 0.463                                                                                           \\
\multirow{-4}{*}{\rotatebox[origin=c]{90}{\begin{tabular}[c]{@{}l@{}}Pretraining\\ Ablation\end{tabular}}}            & \multirow{-4}{*}{(3)}  & Full        & 0.615                                 & 0.618                                & -                                                                                                    & 0.472                                                                                           \\ \midrule
                                                  &                        & Video       & 0.161                                 & 0.121                                & \textcolor{red}{ \textbf{0.878}}                                                                & 0.226                                                                                           \\
                                                  &                        & Accel & 0.462                                 & 0.431                                & 0.837                                                                                                & \textcolor{blue}{ 0.53}                                                                     \\
                                                  &                        & Gyro  & 0.607                                 & 0.59                                 & \textcolor{blue}{ 0.863}                                                                         & 0.306                                                                                           \\
                                                  & \multirow{-4}{*}{\textbf{SURE}} & Full        & \textcolor{red}{ \textbf{0.739}} & \textcolor{red}{ \textbf{0.74}} & -                                                                                                    & \textcolor{red}{ \textbf{0.568}}                                                           \\ \bottomrule
\end{tabular}
}
\end{table}
\subsection{Ablation Study.} 
\textbf{Settings.} We analyze the impact of various modules on SURE's performance in both uncertainty estimation and downstream tasks. This analysis includes testing several ablated versions of SURE:
\begin{itemize}
    \item[(1a)] \textbf{Remove $r^i(.)$ modules}: Ignore incomplete samples during training. 
    \item[(1b)] \textbf{Rule-based imputation}: Replace missing modalities with zeros. 
    \item[(2a)] \textbf{Remove uncertainty estimation}: Train $r^i(.)$ with MSE only, no uncertainty estimation. 
    \item[(2b)] \textbf{Remove reconstruction uncertainty}: Train $r^i(.)$ with MSE; omit error propagation logic. 
    \item[(3)] \textbf{Remove pretrained weights}: Reinitialize and train backbone frameworks from scratch. 
\end{itemize}

\textbf{Results.} We present the performance of all SURE variations on the UTD-MHAD dataset in Table \ref{tab:ablation_utd_mhad}. Overall, each ablation negatively impacts SURE’s performance in its respective tasks. Specifically, ignoring missing modalities (1a) or using simple rule-based imputation (1b) significantly reduces downstream task performance, as incomplete yet labeled data remains underutilized and pretrained fameworks is not effectively leveraged.
Additionally, while removing uncertainty estimation logic has a negligible effect on the final task result (2a, 2b), this cause an inability to quantify output uncertainty effectively, ruin our initial effort in producing reliable predictions. Lastly, the results from variation (3) reinforce our motivation: utilizing pretrained weights is far more efficient and beneficial, especially for smaller datasets involving similar tasks.

\subsection{Analyses for estimated uncertainty.}
\textbf{Convergence Analysis.} We visualize the correlation between estimated uncertainties and prediction errors across all training epochs in Figure \ref{fig:convergence}. Compared to the Negative Log-Likelihood Loss (NLL), $\mathcal{L}_{PCC}$ demonstrates superior performance in both convergence speed and final estimation accuracy. Additionally, the shape of the NLL curve suggests instability, as the correlation trend declines after reaching its peak. Although there are fluctuations, our loss maintains an overall upward trend, eventually stabilizing in the final epochs. This experimental results are highly in accordant with our theoretical analysis of convergence points for NLL loss and our proposed loss (Appendix \ref{sec:app_gaussian_analyses}).
\begin{figure}[th]
    \centering
    \includegraphics[width=0.5\columnwidth]{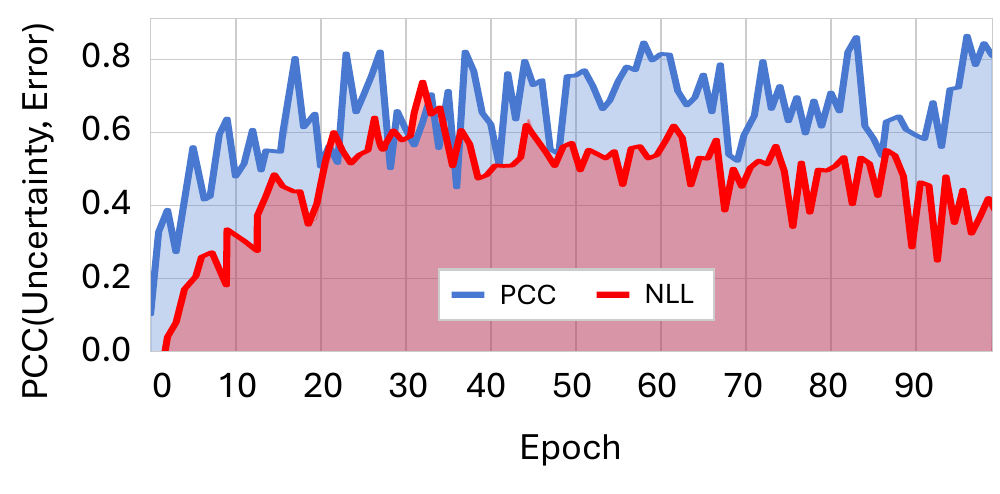}
    \caption{Correlation of estimated uncertainty with prediction error on UTD-MHAD dataset.}
    \label{fig:convergence}
\end{figure}

\begin{figure*}[!t]
    \centering
    \begin{minipage}{0.65\linewidth}
        \centering
        \begin{subfigure}[b]{0.32\textwidth}
        \centering
        \includegraphics[width=\textwidth]{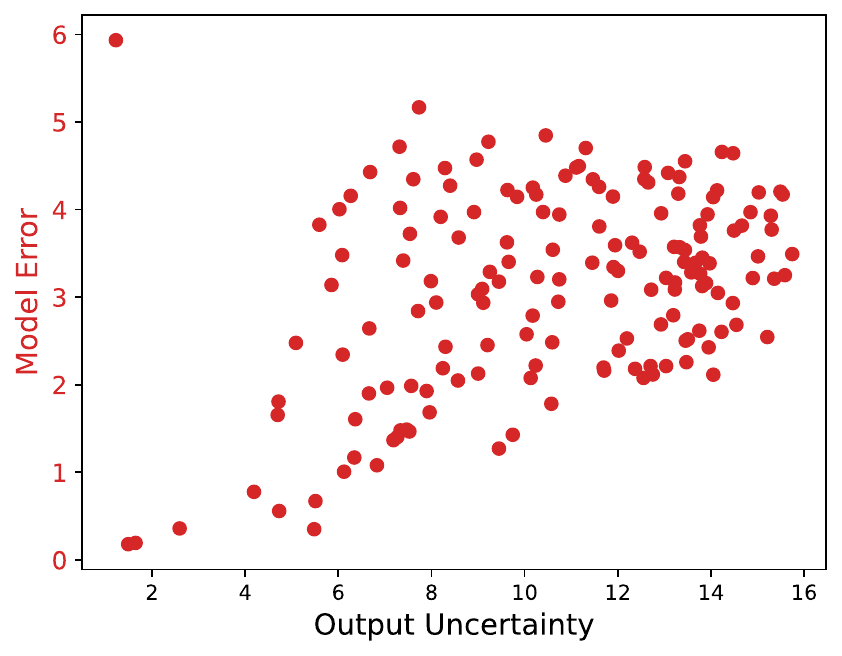}
        \caption{Mod. 0 is missing}
        \end{subfigure}
        \hfill
        \begin{subfigure}[b]{0.32\textwidth}
        \centering
        \includegraphics[width=\textwidth]{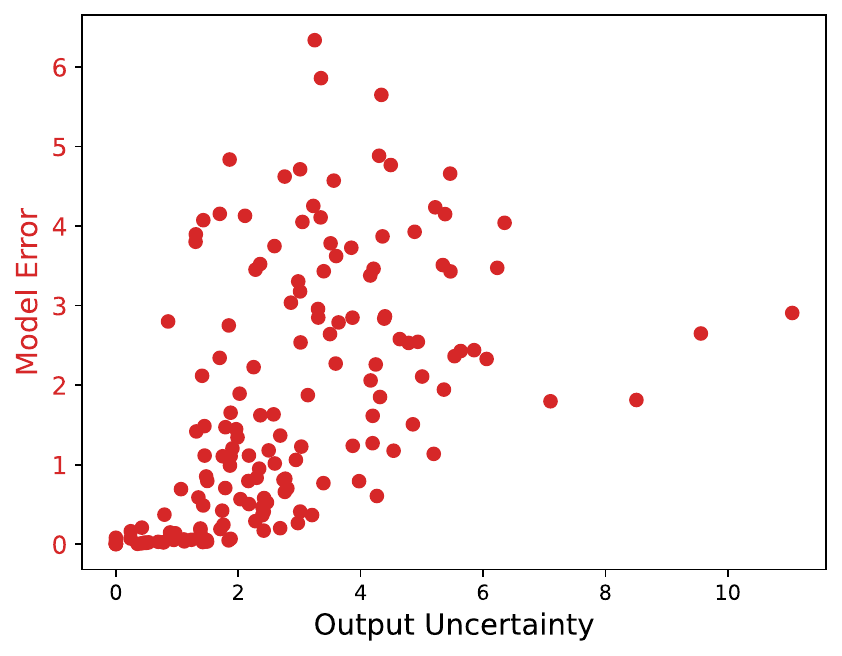}
        \caption{Mod. 1 is missing}
        \end{subfigure}
        \hfill
        \begin{subfigure}[b]{0.32\textwidth}
        \centering
        \includegraphics[width=\textwidth]{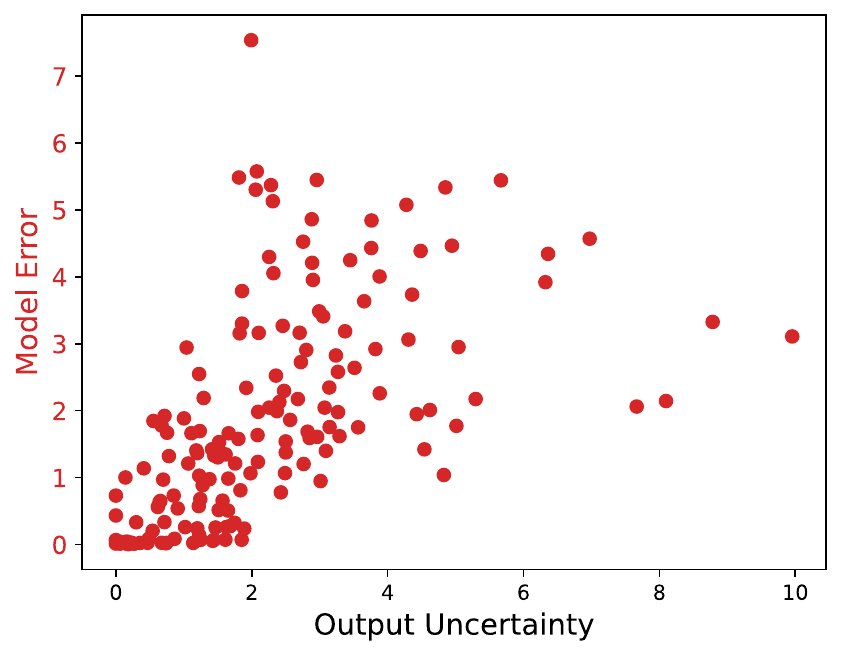}
        \caption{Mod. 2 is missing}
        \end{subfigure}
        \caption{
        Relationship between estimated output uncertainty and output error on UTD-MHAD test dataset.
        }
        \label{fig:missing_analysis_error_out}
    
        \centering
        \begin{subfigure}[b]{0.32\textwidth}
        \centering
        \includegraphics[width=\textwidth]{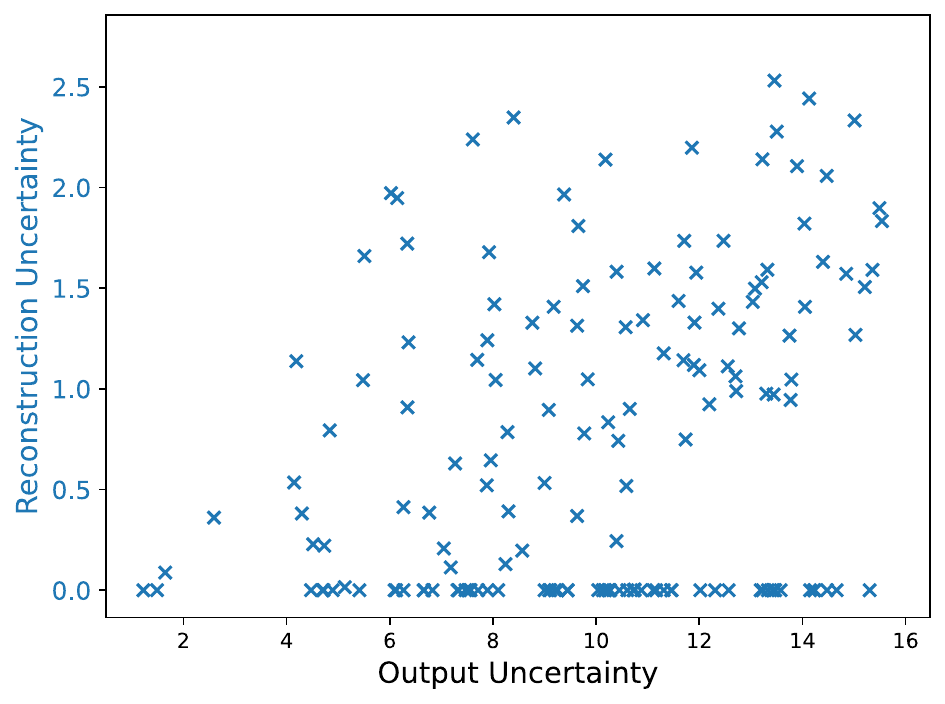}
        \caption{Mod. 0 is missing}
        \end{subfigure}
        \hfill
        \begin{subfigure}[b]{0.32\textwidth}
        \centering
        \includegraphics[width=\textwidth]{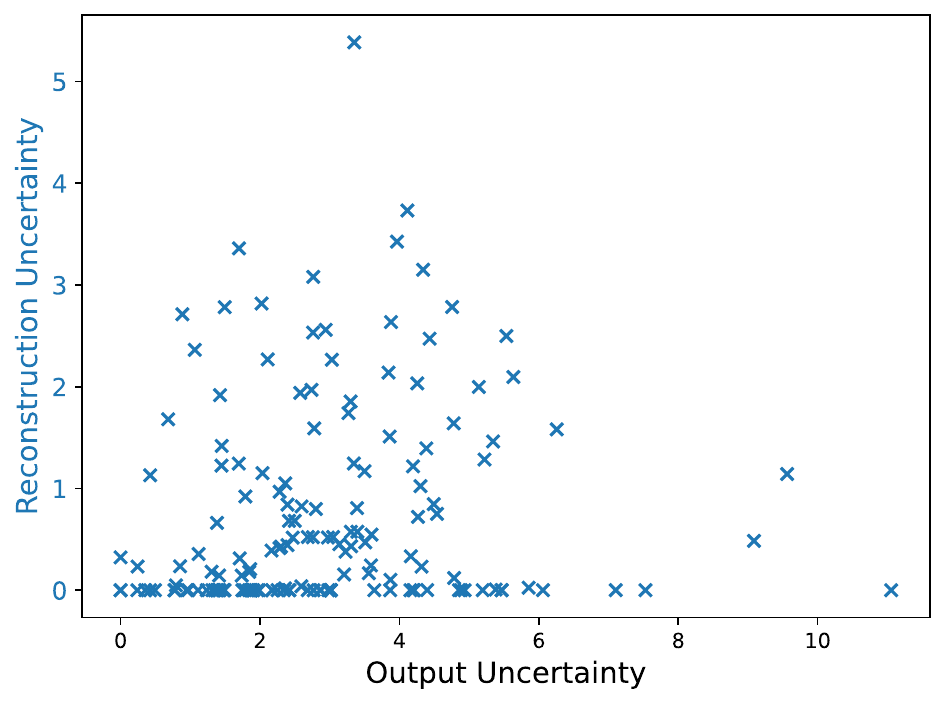}
        \caption{Mod. 1 is missing}
        \end{subfigure}
        \hfill
        \begin{subfigure}[b]{0.32\textwidth}
        \centering
        \includegraphics[width=\textwidth]{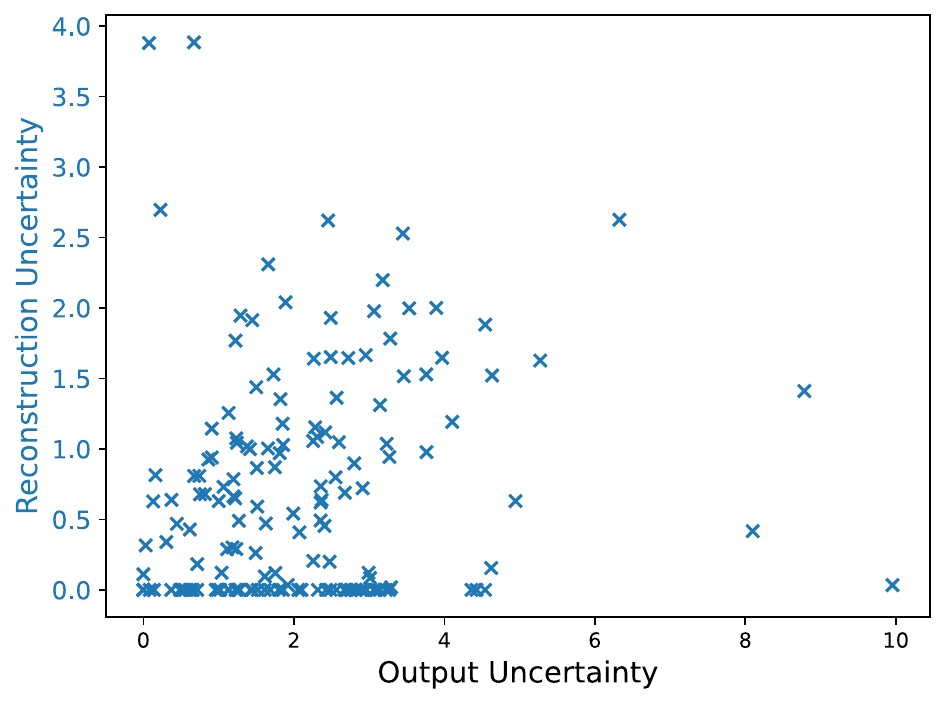}
        \caption{Mod. 2 is missing}
        \end{subfigure}
        \caption{
        Relationship between estimated output uncertainty and reconstruction uncertainty on UTD-MHAD test dataset.
        }
        \label{fig:missing_analysis_in_out}
    \end{minipage}
    \hspace{2em}
    \begin{minipage}{0.25\linewidth}
        \centering
        \begin{subfigure}[b]{\textwidth}
        \centering
        \includegraphics[width=\textwidth]{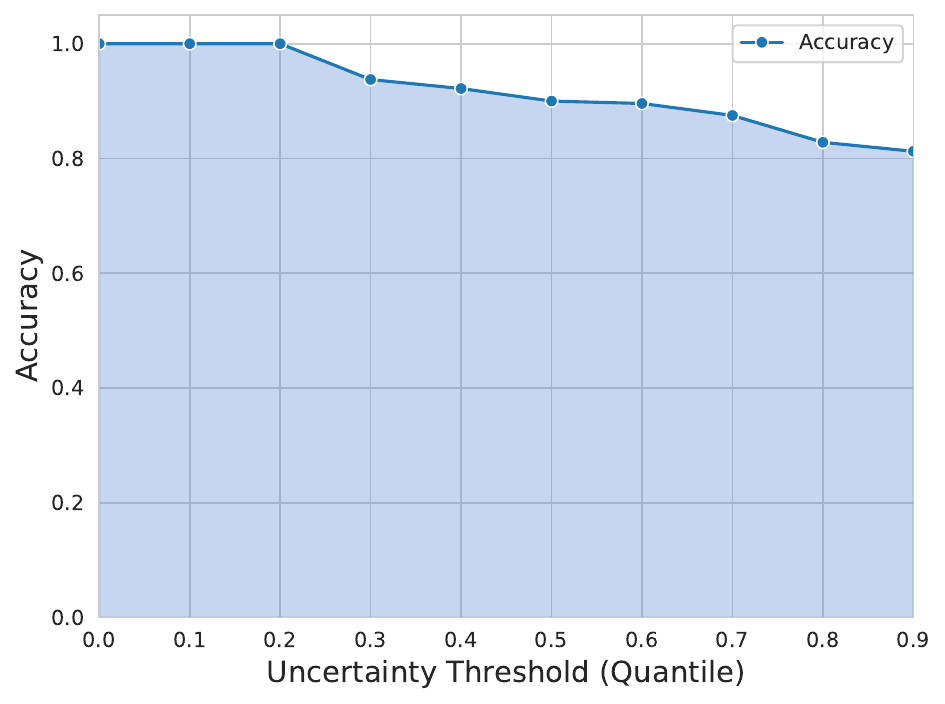}
        \caption{Accuracy-Uncertainty}
        \label{fig:acc_uncert}
        \end{subfigure}
        \hspace{1em}
        \begin{subfigure}[b]{\textwidth}
        \centering
        \includegraphics[width=\textwidth]{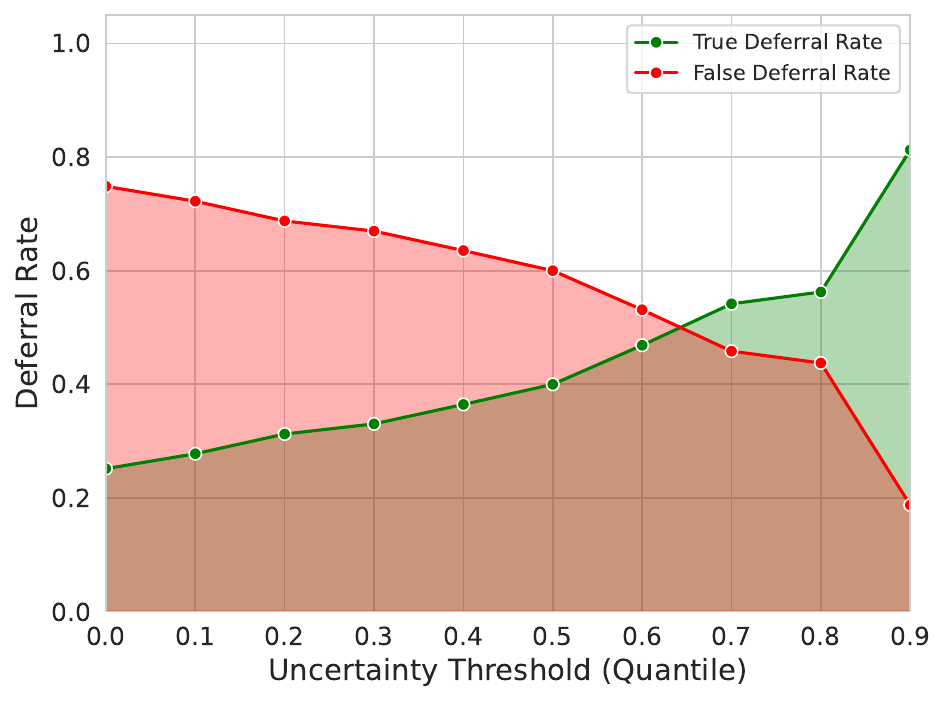}
        \caption{TDR/FDR-Uncertainty}
        \label{fig:rate_uncert}
        \end{subfigure}
        
        \caption{
        Decision Making Process with Uncertainty on UTD-MHAD Dataset.
        }
        \label{fig:decision_making}        
    \end{minipage}
\end{figure*}

\textbf{Reconstruction and Output Uncertainty Analysis.} To better understand the relationship between prediction error, reconstruction uncertainty, and output uncertainty, we visualize these three quantities across all test samples in the UTD-MHAD dataset, with different modalities combinations where each modality is missing. Ideally, the points should cluster along the bottom-left to top-right diagonal, indicating perfect correlation.
With SURE, we observe high efficiency in estimating uncertainty for samples with large prediction errors, which aligns with its intended use as an indicator for potentially error-prone predictions (Figure \ref{fig:missing_analysis_error_out}). Notably, when output uncertainties are high, reconstruction uncertainties tend to be elevated as well (Figure \ref{fig:missing_analysis_in_out}), suggesting that uncertainties arising from the reconstruction process play a significant role in the overall uncertainty estimation.
However, the visualization also indicates a tendency toward overestimating both reconstruction and output uncertainties, highlighting an area for potential improvement in future research.

\subsection{Application: Uncertainty-informed Decision Making with SURE}
\textbf{Settings.} To demonstrate the impact of SURE's uncertainty quantification on decision-making, we simulate this process using a human action recognition task with the UTD-MHAD dataset. SURE is trained with similar settings to those used in the main experiment (Table \ref{tab:utd_mhad}). After training, we use the uncertainty estimates to determine whether the model is confident enough to make a final decision or if it should defer the decision for manual inspection. Different uncertainty thresholds are set based on output uncertainty values from the test dataset. For each threshold, predictions with uncertainty higher than the threshold are deferred, and we record \textbf{Accuracy}, \textbf{True Deferral Rate}, and \textbf{False Deferral Rate} (representing the rate of correctly and incorrectly deferred samples) across all test samples.

\textbf{Results.} Figure \ref{fig:acc_uncert} shows that as more uncertain predictions are deferred, the remaining predictions become more challenging, resulting in a decline in accuracy. This suggests that while the deferral strategy successfully excludes uncertain predictions, it also leaves a set of samples that are inherently harder to predict accurately. 
Additionally, Figure \ref{fig:rate_uncert} demonstrates that as the uncertainty threshold increases, the true deferral rate rises, while the false deferral rate falls. This indicates that the model effectively identifies uncertain predictions (leading to more true deferrals) while reducing unnecessary deferrals. The point at which the true deferral rate surpasses the false deferral rate represents an optimal balance, maximizing decision quality and minimizing unwarranted deferrals. Combining the extended decision-making process under missing modality conditions (as presented in Appendix \ref{sec:app_decision_making}), this analysis indicates that SURE's estimated uncertainty is a reliable indicator for ensuring high prediction quality.

\section{Conclusion}
\label{sec:conclude}
\textbf{Contributions.} This work introduces SURE (Scalable Uncertainty and Reconstruction Estimation), which leverages pretrained multimodal frameworks for small datasets with missing modalities using latent space reconstruction. SURE integrates uncertainty estimation via a Pearson Correlation-based loss and error propagation, ensuring reliable predictions and adaptability across tasks and networks. It achieves state-of-the-art results in both downstream performance and uncertainty estimation.

\textbf{Limitations.} In developing SURE, we observed that certain modalities dominate the reconstruction process, making it easier to predict missing ones but causing significant performance drops when unavailable. This imbalance, unexplored in the current SURE framework, may limit the development of robust reconstruction modules and presents a valuable direction for future work.

\newpage
\bibliography{arxiv/main}

\begin{thebibliography}{50}
\providecommand{\natexlab}[1]{#1}
\providecommand{\url}[1]{\texttt{#1}}
\expandafter\ifx\csname urlstyle\endcsname\relax
  \providecommand{\doi}[1]{doi: #1}\else
  \providecommand{\doi}{doi: \begingroup \urlstyle{rm}\Url}\fi

\bibitem[Arevalo et~al.(2020)Arevalo, Solorio, Montes-y G\'{o}mez, and Gonz\'{a}lez]{mm_imdb}
John Arevalo, Thamar Solorio, Manuel Montes-y G\'{o}mez, and Fabio~A. Gonz\'{a}lez.
\newblock Gated multimodal networks.
\newblock \emph{Neural Comput. Appl.}, 32\penalty0 (14):\penalty0 10209–10228, July 2020.
\newblock ISSN 0941-0643.
\newblock \doi{10.1007/s00521-019-04559-1}.
\newblock URL \url{https://doi.org/10.1007/s00521-019-04559-1}.

\bibitem[Arras(1998)]{err_prop_1}
Kai~O Arras.
\newblock An introduction to error propagation: derivation, meaning and examples of equation cy= fx cx fxt.
\newblock Technical report, ETH Zurich, 1998.

\bibitem[Ayhan \& Berens(2018)Ayhan and Berens]{ayhan2018test}
Murat~Seckin Ayhan and Philipp Berens.
\newblock Test-time data augmentation for estimation of heteroscedastic aleatoric uncertainty in deep neural networks.
\newblock In \emph{Medical Imaging with Deep Learning}, 2018.

\bibitem[Bachmann et~al.(2022)Bachmann, Mizrahi, Atanov, and Zamir]{eccv_22}
Roman Bachmann, David Mizrahi, Andrei Atanov, and Amir Zamir.
\newblock Multimae: Multi-modal multi-task masked autoencoders.
\newblock In \emph{European Conference on Computer Vision}, pp.\  348--367. Springer, 2022.

\bibitem[Chen et~al.(2015)Chen, Jafari, and Kehtarnavaz]{utd_mhad_dts}
Chen Chen, Roozbeh Jafari, and Nasser Kehtarnavaz.
\newblock Utd-mhad: A multimodal dataset for human action recognition utilizing a depth camera and a wearable inertial sensor.
\newblock In \emph{2015 IEEE International conference on image processing (ICIP)}, pp.\  168--172. IEEE, 2015.

\bibitem[Daxberger et~al.(2021)Daxberger, Kristiadi, Immer, Eschenhagen, Bauer, and Hennig]{daxberger2021laplace}
Erik Daxberger, Agustinus Kristiadi, Alexander Immer, Runa Eschenhagen, Matthias Bauer, and Philipp Hennig.
\newblock Laplace redux-effortless bayesian deep learning.
\newblock \emph{Advances in Neural Information Processing Systems}, 34:\penalty0 20089--20103, 2021.

\bibitem[Devlin(2018)]{bert}
Jacob Devlin.
\newblock Bert: Pre-training of deep bidirectional transformers for language understanding.
\newblock \emph{arXiv preprint arXiv:1810.04805}, 2018.

\bibitem[Dieck(2007)]{pytago_uncert}
Ronald~H Dieck.
\newblock \emph{Measurement uncertainty: methods and applications}.
\newblock ISA, 2007.

\bibitem[Eschenhagen et~al.(2021)Eschenhagen, Daxberger, Hennig, and Kristiadi]{eschenhagen2021mixtures}
Runa Eschenhagen, Erik Daxberger, Philipp Hennig, and Agustinus Kristiadi.
\newblock Mixtures of laplace approximations for improved post-hoc uncertainty in deep learning.
\newblock \emph{arXiv preprint arXiv:2111.03577}, 2021.

\bibitem[Feichtenhofer et~al.(2022)Feichtenhofer, Li, He, et~al.]{nips22}
Christoph Feichtenhofer, Yanghao Li, Kaiming He, et~al.
\newblock Masked autoencoders as spatiotemporal learners.
\newblock \emph{Advances in neural information processing systems}, 35:\penalty0 35946--35958, 2022.

\bibitem[Fu et~al.(2018)Fu, Gong, Wang, Batmanghelich, and Tao]{fu2018deep}
Huan Fu, Mingming Gong, Chaohui Wang, Kayhan Batmanghelich, and Dacheng Tao.
\newblock Deep ordinal regression network for monocular depth estimation.
\newblock In \emph{Proceedings of the IEEE conference on computer vision and pattern recognition}, pp.\  2002--2011, 2018.

\bibitem[Gawlikowski et~al.(2023)Gawlikowski, Tassi, Ali, Lee, Humt, Feng, Kruspe, Triebel, Jung, Roscher, et~al.]{gawlikowski2023survey}
Jakob Gawlikowski, Cedrique Rovile~Njieutcheu Tassi, Mohsin Ali, Jongseok Lee, Matthias Humt, Jianxiang Feng, Anna Kruspe, Rudolph Triebel, Peter Jung, Ribana Roscher, et~al.
\newblock A survey of uncertainty in deep neural networks.
\newblock \emph{Artificial Intelligence Review}, 56\penalty0 (Suppl 1):\penalty0 1513--1589, 2023.

\bibitem[Guo et~al.(2017)Guo, Pleiss, Sun, and Weinberger]{uce}
Chuan Guo, Geoff Pleiss, Yu~Sun, and Kilian~Q Weinberger.
\newblock On calibration of modern neural networks.
\newblock In \emph{International conference on machine learning}, pp.\  1321--1330. PMLR, 2017.

\bibitem[Guo et~al.(2024)Guo, Jin, and Zhao]{prompt_2}
Zirun Guo, Tao Jin, and Zhou Zhao.
\newblock Multimodal prompt learning with missing modalities for sentiment analysis and emotion recognition.
\newblock \emph{arXiv preprint arXiv:2407.05374}, 2024.

\bibitem[Haque et~al.(2022)Haque, Khan, Jubair, Anjum, and Niloy]{book_dts}
Emdadul Haque, Md~Faraz~Kabir Khan, Mohammad~Imrul Jubair, Jarin Anjum, and Abrar~Zahir Niloy.
\newblock Book cover synthesis from the summary.
\newblock In \emph{2022 IEEE/ACS 19th International Conference on Computer Systems and Applications (AICCSA)}, pp.\  1--6. IEEE, 2022.

\bibitem[He et~al.(2016)He, Zhang, Ren, and Sun]{resnet}
Kaiming He, Xiangyu Zhang, Shaoqing Ren, and Jian Sun.
\newblock Deep residual learning for image recognition.
\newblock In \emph{Proceedings of the IEEE conference on computer vision and pattern recognition}, pp.\  770--778, 2016.

\bibitem[Huang et~al.(2021)Huang, Du, Xue, Chen, Zhao, and Huang]{multimodal_better}
Yu~Huang, Chenzhuang Du, Zihui Xue, Xuanyao Chen, Hang Zhao, and Longbo Huang.
\newblock What makes multi-modal learning better than single (provably).
\newblock \emph{Advances in Neural Information Processing Systems}, 34:\penalty0 10944--10956, 2021.

\bibitem[Islam \& Iqbal(2020)Islam and Iqbal]{hamlet}
Md~Mofijul Islam and Tariq Iqbal.
\newblock Hamlet: A hierarchical multimodal attention-based human activity recognition algorithm.
\newblock In \emph{2020 IEEE/RSJ International Conference on Intelligent Robots and Systems (IROS)}, pp.\  10285--10292. IEEE, 2020.

\bibitem[Jang et~al.(2024)Jang, Wang, and Kim]{prompt_3}
Jaehyuk Jang, Yooseung Wang, and Changick Kim.
\newblock Towards robust multimodal prompting with missing modalities.
\newblock In \emph{ICASSP 2024-2024 IEEE International Conference on Acoustics, Speech and Signal Processing (ICASSP)}, pp.\  8070--8074. IEEE, 2024.

\bibitem[Kendall \& Gal(2017{\natexlab{a}})Kendall and Gal]{NIPS2017_2650d608}
Alex Kendall and Yarin Gal.
\newblock What uncertainties do we need in bayesian deep learning for computer vision?
\newblock In I.~Guyon, U.~Von Luxburg, S.~Bengio, H.~Wallach, R.~Fergus, S.~Vishwanathan, and R.~Garnett (eds.), \emph{Advances in Neural Information Processing Systems}, volume~30. Curran Associates, Inc., 2017{\natexlab{a}}.
\newblock URL \url{https://proceedings.neurips.cc/paper_files/paper/2017/file/2650d6089a6d640c5e85b2b88265dc2b-Paper.pdf}.

\bibitem[Kendall \& Gal(2017{\natexlab{b}})Kendall and Gal]{g_mle_1}
Alex Kendall and Yarin Gal.
\newblock What uncertainties do we need in bayesian deep learning for computer vision?
\newblock \emph{Advances in neural information processing systems}, 30, 2017{\natexlab{b}}.

\bibitem[Kendall \& Gal(2017{\natexlab{c}})Kendall and Gal]{kendall2017uncertainties}
Alex Kendall and Yarin Gal.
\newblock What uncertainties do we need in bayesian deep learning for computer vision?
\newblock \emph{Advances in neural information processing systems}, 30, 2017{\natexlab{c}}.

\bibitem[Kiela et~al.(2019)Kiela, Bhooshan, Firooz, Perez, and Testuggine]{mmbt}
Douwe Kiela, Suvrat Bhooshan, Hamed Firooz, Ethan Perez, and Davide Testuggine.
\newblock Supervised multimodal bitransformers for classifying images and text.
\newblock \emph{arXiv preprint arXiv:1909.02950}, 2019.

\bibitem[Kong et~al.(2019)Kong, Wu, Deng, Klinkigt, Tong, and Murakami]{mmact_dts}
Quan Kong, Ziming Wu, Ziwei Deng, Martin Klinkigt, Bin Tong, and Tomokazu Murakami.
\newblock Mmact: A large-scale dataset for cross modal human action understanding.
\newblock In \emph{The IEEE International Conference on Computer Vision (ICCV)}, October 2019.

\bibitem[Lakshminarayanan et~al.(2017)Lakshminarayanan, Pritzel, and Blundell]{deepensemble}
Balaji Lakshminarayanan, Alexander Pritzel, and Charles Blundell.
\newblock Simple and scalable predictive uncertainty estimation using deep ensembles.
\newblock \emph{Advances in neural information processing systems}, 30, 2017.

\bibitem[Laves et~al.(2019)Laves, Ihler, Kortmann, and Ortmaier]{do_2}
Max-Heinrich Laves, Sontje Ihler, Karl-Philipp Kortmann, and Tobias Ortmaier.
\newblock Well-calibrated model uncertainty with temperature scaling for dropout variational inference.
\newblock \emph{arXiv preprint arXiv:1909.13550}, 2019.

\bibitem[Lee \& Van~der Schaar(2021)Lee and Van~der Schaar]{missing_1}
Changhee Lee and Mihaela Van~der Schaar.
\newblock A variational information bottleneck approach to multi-omics data integration.
\newblock In \emph{International Conference on Artificial Intelligence and Statistics}, pp.\  1513--1521. PMLR, 2021.

\bibitem[Lee et~al.(2023)Lee, Tsai, Chiu, and Lee]{prompt_1}
Yi-Lun Lee, Yi-Hsuan Tsai, Wei-Chen Chiu, and Chen-Yu Lee.
\newblock Multimodal prompting with missing modalities for visual recognition.
\newblock In \emph{Proceedings of the IEEE/CVF Conference on Computer Vision and Pattern Recognition}, pp.\  14943--14952, 2023.

\bibitem[Li et~al.(2024{\natexlab{a}})Li, Yang, Lei, Wang, Wang, Su, Yang, Wang, Sun, and Zhang]{missing_add_3}
Mingcheng Li, Dingkang Yang, Yuxuan Lei, Shunli Wang, Shuaibing Wang, Liuzhen Su, Kun Yang, Yuzheng Wang, Mingyang Sun, and Lihua Zhang.
\newblock A unified self-distillation framework for multimodal sentiment analysis with uncertain missing modalities.
\newblock In \emph{Proceedings of the AAAI Conference on Artificial Intelligence}, volume~38, pp.\  10074--10082, 2024{\natexlab{a}}.

\bibitem[Li et~al.(2024{\natexlab{b}})Li, Yang, Zhao, Wang, Wang, Yang, Sun, Kou, Qian, and Zhang]{missing_add_1}
Mingcheng Li, Dingkang Yang, Xiao Zhao, Shuaibing Wang, Yan Wang, Kun Yang, Mingyang Sun, Dongliang Kou, Ziyun Qian, and Lihua Zhang.
\newblock Correlation-decoupled knowledge distillation for multimodal sentiment analysis with incomplete modalities.
\newblock In \emph{Proceedings of the IEEE/CVF Conference on Computer Vision and Pattern Recognition}, pp.\  12458--12468, 2024{\natexlab{b}}.

\bibitem[Lian et~al.(2023)Lian, Chen, Sun, Liu, and Tao]{missing_add_2}
Zheng Lian, Lan Chen, Licai Sun, Bin Liu, and Jianhua Tao.
\newblock Gcnet: Graph completion network for incomplete multimodal learning in conversation.
\newblock \emph{IEEE Transactions on pattern analysis and machine intelligence}, 45\penalty0 (7):\penalty0 8419--8432, 2023.

\bibitem[Ma et~al.(2021)Ma, Ren, Zhao, Tulyakov, Wu, and Peng]{aaai_21}
Mengmeng Ma, Jian Ren, Long Zhao, Sergey Tulyakov, Cathy Wu, and Xi~Peng.
\newblock Smil: Multimodal learning with severely missing modality.
\newblock In \emph{Proceedings of the AAAI Conference on Artificial Intelligence}, number~3 in 35, pp.\  2302--2310, 2021.

\bibitem[Ma et~al.(2022)Ma, Ren, Zhao, Testuggine, and Peng]{cvpr22}
Mengmeng Ma, Jian Ren, Long Zhao, Davide Testuggine, and Xi~Peng.
\newblock Are multimodal transformers robust to missing modality?
\newblock In \emph{Proceedings of the IEEE/CVF Conference on Computer Vision and Pattern Recognition}, pp.\  18177--18186, 2022.

\bibitem[Maddox et~al.(2019)Maddox, Izmailov, Garipov, Vetrov, and Wilson]{do_1}
Wesley~J Maddox, Pavel Izmailov, Timur Garipov, Dmitry~P Vetrov, and Andrew~Gordon Wilson.
\newblock A simple baseline for bayesian uncertainty in deep learning.
\newblock \emph{Advances in neural information processing systems}, 32, 2019.

\bibitem[Poklukar et~al.(2022)Poklukar, Vasco, Yin, Melo, Paiva, and Kragic]{icml22}
Petra Poklukar, Miguel Vasco, Hang Yin, Francisco~S Melo, Ana Paiva, and Danica Kragic.
\newblock Geometric multimodal contrastive representation learning.
\newblock In \emph{International Conference on Machine Learning}, pp.\  17782--17800. PMLR, 2022.

\bibitem[Srivastava et~al.(2014)Srivastava, Hinton, Krizhevsky, Sutskever, and Salakhutdinov]{do_3}
Nitish Srivastava, Geoffrey Hinton, Alex Krizhevsky, Ilya Sutskever, and Ruslan Salakhutdinov.
\newblock Dropout: a simple way to prevent neural networks from overfitting.
\newblock \emph{The journal of machine learning research}, 15\penalty0 (1):\penalty0 1929--1958, 2014.

\bibitem[Tellinghuisen(2001)]{err_prop_2}
Joel Tellinghuisen.
\newblock Statistical error propagation.
\newblock \emph{The Journal of Physical Chemistry A}, 105\penalty0 (15):\penalty0 3917--3921, 2001.

\bibitem[Upadhyay et~al.(2022)Upadhyay, Karthik, Chen, Mancini, and Akata]{upadhyay2022bayescap}
Uddeshya Upadhyay, Shyamgopal Karthik, Yanbei Chen, Massimiliano Mancini, and Zeynep Akata.
\newblock Bayescap: Bayesian identity cap for calibrated uncertainty in frozen neural networks.
\newblock In \emph{European Conference on Computer Vision}, pp.\  299--317. Springer, 2022.

\bibitem[Wan et~al.(2023)Wan, Mao, Zhang, and Dai]{sota2}
Zhexiong Wan, Yuxin Mao, Jing Zhang, and Yuchao Dai.
\newblock Rpeflow: Multimodal fusion of rgb-pointcloud-event for joint optical flow and scene flow estimation.
\newblock In \emph{Proceedings of the IEEE/CVF International Conference on Computer Vision}, pp.\  10030--10040, 2023.

\bibitem[Wang et~al.(2019{\natexlab{a}})Wang, Li, Aertsen, Deprest, Ourselin, and Vercauteren]{g_mle_2}
Guotai Wang, Wenqi Li, Michael Aertsen, Jan Deprest, S{\'e}bastien Ourselin, and Tom Vercauteren.
\newblock Aleatoric uncertainty estimation with test-time augmentation for medical image segmentation with convolutional neural networks.
\newblock \emph{Neurocomputing}, 338:\penalty0 34--45, 2019{\natexlab{a}}.

\bibitem[Wang et~al.(2019{\natexlab{b}})Wang, Li, Aertsen, Deprest, Ourselin, and Vercauteren]{wang2019aleatoric}
Guotai Wang, Wenqi Li, Michael Aertsen, Jan Deprest, S{\'e}bastien Ourselin, and Tom Vercauteren.
\newblock Aleatoric uncertainty estimation with test-time augmentation for medical image segmentation with convolutional neural networks.
\newblock \emph{Neurocomputing}, 338:\penalty0 34--45, 2019{\natexlab{b}}.

\bibitem[Wang et~al.(2020)Wang, Zhan, Thompson, and Zhou]{kdd20}
Qi~Wang, Liang Zhan, Paul Thompson, and Jiayu Zhou.
\newblock Multimodal learning with incomplete modalities by knowledge distillation.
\newblock In \emph{Proceedings of the 26th ACM SIGKDD International Conference on Knowledge Discovery \& Data Mining}, pp.\  1828--1838, 2020.

\bibitem[Wang et~al.(2023)Wang, Cui, and Li]{dicmor}
Yuanzhi Wang, Zhen Cui, and Yong Li.
\newblock Distribution-consistent modal recovering for incomplete multimodal learning.
\newblock In \emph{Proceedings of the IEEE/CVF International Conference on Computer Vision}, pp.\  22025--22034, 2023.

\bibitem[Wang et~al.(2024)Wang, Li, and Cui]{imder}
Yuanzhi Wang, Yong Li, and Zhen Cui.
\newblock Incomplete multimodality-diffused emotion recognition.
\newblock \emph{Advances in Neural Information Processing Systems}, 36, 2024.

\bibitem[Woo et~al.(2023)Woo, Lee, Park, Nugroho, and Kim]{actionmae}
Sangmin Woo, Sumin Lee, Yeonju Park, Muhammad~Adi Nugroho, and Changick Kim.
\newblock Towards good practices for missing modality robust action recognition.
\newblock In \emph{Proceedings of the AAAI Conference on Artificial Intelligence}, volume~37, pp.\  2776--2784, 2023.

\bibitem[Wu \& Goodman(2018)Wu and Goodman]{vae_1}
Mike Wu and Noah Goodman.
\newblock Multimodal generative models for scalable weakly-supervised learning.
\newblock \emph{Advances in neural information processing systems}, 31, 2018.

\bibitem[Wu et~al.(2024)Wu, Gong, Koo, and Hirschberg]{mmml}
Zehui Wu, Ziwei Gong, Jaywon Koo, and Julia Hirschberg.
\newblock Multimodal multi-loss fusion network for sentiment analysis.
\newblock In \emph{Proceedings of the 2024 Conference of the North American Chapter of the Association for Computational Linguistics: Human Language Technologies (Volume 1: Long Papers)}, pp.\  3588--3602, 2024.

\bibitem[Zadeh et~al.(2016)Zadeh, Zellers, Pincus, and Morency]{cmu_mosi}
Amir Zadeh, Rowan Zellers, Eli Pincus, and Louis-Philippe Morency.
\newblock Multimodal sentiment intensity analysis in videos: Facial gestures and verbal messages.
\newblock \emph{IEEE Intelligent Systems}, 31\penalty0 (6):\penalty0 82--88, 2016.

\bibitem[Zadeh et~al.(2018)Zadeh, Liang, Poria, Cambria, and Morency]{cmu_mosei}
AmirAli~Bagher Zadeh, Paul~Pu Liang, Soujanya Poria, Erik Cambria, and Louis-Philippe Morency.
\newblock Multimodal language analysis in the wild: Cmu-mosei dataset and interpretable dynamic fusion graph.
\newblock In \emph{Proceedings of the 56th Annual Meeting of the Association for Computational Linguistics (Volume 1: Long Papers)}, pp.\  2236--2246, 2018.

\bibitem[Zong \& Sun(2023)Zong and Sun]{sota1}
Daoming Zong and Shiliang Sun.
\newblock Mcomet: Multimodal fusion transformer for physical audiovisual commonsense reasoning.
\newblock In \emph{Proceedings of the AAAI Conference on Artificial Intelligence}, volume~37, pp.\  6621--6629, 2023.

\end{thebibliography}
\bibliographystyle{iclr2024_conference}

\newpage

\appendix
\section{Appendix}
\subsection{Related Works} \label{sec:related_works}

\textbf{Multimodal missing modalities.} Recent research has focused on developing models resilient to missing modalities \cite{aaai_21, cvpr22, icml22, actionmae, prompt_1, missing_add_1, missing_add_2, missing_add_3}. Key directions include: (1) Contrastive loss-based methods that align latent spaces for cross-modal knowledge transfer \cite{cvpr22,missing_1,kdd20}; (2) Generative approaches, such as VAE-based models \cite{vae_1} or latent space reconstruction \cite{actionmae}, to approximate missing inputs; and (3) Prompt-based techniques \cite{prompt_1, prompt_2, prompt_3} that use trainable prompts to adapt models to various combinations of missing modalities.
In the first direction, representative methods like Smil \cite{aaai_21} employ Bayesian meta-learning to approximate latent features for incomplete data, while GMC \cite{icml22} ensures geometric alignment in multimodal representations, allowing unimodal data to substitute for missing modalities.
For the second direction, ActionMAE, inspired by the masked autoencoder framework \cite{nips22, eccv_22}, reconstructs latent representations of missing modalities by randomly dropping feature tokens and learning to predict them.
In the third direction, Lee et al. \cite{prompt_1} propose missing-aware prompts that are integrated into pretrained multimodal transformers during training, enabling models to handle missing modalities effectively during evaluation.
While these approaches show promise in specific scenarios, they often rely heavily on large-scale datasets and lack robust mechanisms for quantifying uncertainty in the presence of missing modalities.  SURE addresses these gaps by leveraging pretrained models that require less data and providing a robust framework for estimating uncertainty in both reconstructed inputs and downstream predictions, enhancing reliability and interpretability.

\textbf{Uncertainty Estimation.}
Recent methods for uncertainty estimation in predictions primarily rely on Bayesian models \cite{deepensemble, kendall2017uncertainties}. However, while these models can estimate uncertainty, their predictive performance often lags behind other approaches. Some post-hoc works have explored using Laplace approximation to estimate uncertainty \cite{daxberger2021laplace, eschenhagen2021mixtures}, but these methods require computing the Hessian matrix, making them infeasible for high-dimensional problems \cite{fu2018deep}. Another direction involves test-time data augmentation \cite{wang2019aleatoric, ayhan2018test}, where multiple outputs are perturbed to estimate uncertainty. However, this approach is sometimes poorly calibrated, which is critical for accurate uncertainty estimation \cite{gawlikowski2023survey}.
SURE offers a more efficient alternative by estimating uncertainty without compromising predictive performance on downstream tasks. Unlike Laplace approximation, SURE avoids computational issues in high-dimensional spaces, and it does not rely on test-time perturbations, ensuring better-calibrated uncertainty estimates across diverse settings. Additionally, SURE imposes no assumptions on the output size, making it more flexible for a variety of applications.

\subsection{SURE's Additional Details}

\subsubsection{Negative Log Likelihood Loss for Uncertainty Estimation}
\label{sec:app_gaussian_analyses}

\textbf{Analysis for the convergence of $\mathcal{L}_{NLL}(.,.)$}.

The detailed derivation of the gradient of $\mathcal{L}_{NLL}$ with respect to prediction $\Tilde{y}_i$ is:
\begin{align*}
     \frac{\partial \mathcal{L}_{NLL}(\Tilde{y}, \Tilde{\sigma}^2)}{\partial \Tilde{y}_i} 
     = \frac{\partial }{\partial \Tilde{y}_i}\sum_{i=1}^N \frac{\Tilde{\epsilon_i}^2}{2 \Tilde{\sigma}_i^2}+\frac{\log \left(\Tilde{\sigma}_i^2\right)}{2}
     = \frac{\Tilde{\epsilon_i}}{\Tilde{\sigma}_i^2}
     = \frac{\Tilde{y}_i - y_i}{\Tilde{\sigma}_i^2}.
\end{align*}
Solving $\frac{\partial \mathcal{L}_{NLL}(\Tilde{y}, \Tilde{\sigma}^2)}{\partial \Tilde{y}_i} = 0$ give us the closed form solution $\Tilde{y}^*_i = y_i$ (One can further verify sufficient condition $\frac{\partial^2 \mathcal{L}_{NLL}(\Tilde{y}_i, \Tilde{\sigma}^2_i)}{\partial \Tilde{y}_i^2} = \frac{1}{\Tilde{\sigma}^2_i} > 0$ hold true $\forall i$).

Similarly, gradient of $\mathcal{L}_{NLL}$ with respect to $\Tilde{\sigma}^2_i$ is:
\begin{align*}
    \frac{\partial \mathcal{L}_{NLL}(\Tilde{y}, \Tilde{\sigma}^2)}{\partial \Tilde{\sigma}^2_i}
    = \frac{\partial }{\partial \Tilde{\sigma}^2_i}\sum_{i=1}^N \frac{\Tilde{\epsilon_i}^2}{2 \Tilde{\sigma}_i^2}+\frac{\log \left(\Tilde{\sigma}_i^2\right)}{2}
    = \frac{1}{2\left(\Tilde{\sigma_i}^2\right)^2}\left(\Tilde{\sigma}_i^2-\Tilde{\epsilon}_i^2\right).
\end{align*}
Setting $\frac{\partial \mathcal{L}_{NLL}(\Tilde{y}, \Tilde{\sigma}^2)}{\partial \Tilde{\sigma}^2_i} = 0$ yield $\Tilde{\sigma}_i^{2*}=\Tilde{\epsilon}_i^2$. Verifying the sufficient condition:
\begin{align*}
    \left.\frac{\partial^2 \mathcal{L}_{NLL}}{\partial\left(\Tilde{\sigma}^2_i\right)^2}\right|_{\Tilde{\sigma}^2_i=\Tilde{\epsilon}_i^2}=\frac{1}{2}\left(-\frac{1}{\left(\Tilde{\epsilon}_i^2\right)^2}+\frac{2 \Tilde{\epsilon}_i^2}{\left(\Tilde{\epsilon}_i^2\right)^3}\right)=\frac{1}{2}\left(-\frac{1}{\Tilde{\epsilon}_i^4}+\frac{2}{\Tilde{\epsilon}_i^4}\right)=\frac{1}{2} \cdot \frac{1}{\Tilde{\epsilon}_i^4} > 0
\end{align*}
This test result indicates a local minimum at $\Tilde{\sigma}_i^{2*}=\Tilde{\epsilon}_i^2$.

The issue optimizing $\mathcal{L}_{NLL}(\Tilde{y}, \Tilde{\sigma}^2)$ come up when $\Tilde{\epsilon}_i \rightarrow 0$, this pull the gradient $\frac{\partial \mathcal{L}_{NLL}(\Tilde{y}_i, \Tilde{\sigma}^2_i)}{\partial \Tilde{\sigma}^2_i}$ to the form $\frac{0}{0}$, which is mathematically undefined. This pose a significant issue for gradient-based optimization algorithms like Gradient Descent and cause arbitrary potential issues (gradient vanishing/exploding, numerical under/overflow sensitive to small changes of $\Tilde{\epsilon}^2$, etc).

\textbf{Analysis for the convergence of $\mathcal{L}_{PCC}(.,.)$}.

For this analysis, we focus on the convergence for finding optimal $\Tilde{\sigma}_i^{2*}$ of $\mathcal{L}_{PCC}(.,.)$.
With:
\begin{align*}
    &\mathcal{L}_{PCC}(\Tilde{\sigma}^2,\Tilde{\epsilon}^2) = 1 - r(\Tilde{\sigma}^2,\Tilde{\epsilon}^2); \\
    r(\Tilde{\sigma}^2, \Tilde{\epsilon}^2) &=\frac{\sum_{i=1}^N\left(\Tilde{\sigma}^2_i-\mu_{\sigma^2}\right)\left(\Tilde{\epsilon}^2_i-\mu_{\epsilon^2}\right)}{\sqrt{\sum_{i=1}^N\left(\Tilde{\sigma}^2_i-\mu_{\sigma^2}\right)^2} \sqrt{\sum_{i=1}^N\left(\Tilde{\epsilon}^2_i-\mu_{\epsilon^2}\right)^2}} := \frac{A}{B}.
\end{align*}
We have:
\begin{align*}
    \frac{\partial \mathcal{L}_{PCC}(\Tilde{y}, \Tilde{\sigma}^2)}{\partial \Tilde{\sigma}^2_i} = - \frac{\partial r(\Tilde{y}, \Tilde{\sigma}^2)}{\partial \Tilde{\sigma}^2_i} = - \frac{1}{B}\frac{\partial A}{\partial \Tilde{\sigma}^2_i} + \frac{A}{B^2}\frac{\partial B}{\partial \Tilde{\sigma}^2_i}.
\end{align*}
\begin{align*}
    \frac{\partial A}{\partial \Tilde{\sigma}^2_i} &= \frac{\partial}{\partial \Tilde{\sigma}^2_i} \sum_{j=1}^N\left(\Tilde{\sigma}^2_j-\mu_{\sigma^2}\right)\left(\Tilde{\epsilon}^2_j-\mu_{\epsilon^2}\right) \\
    &= \sum_{j=1}^N (\delta_{ij} - \frac{1}{N})(\Tilde{\epsilon}^2_j-\mu_{\epsilon^2}) \text{ (where } \delta_{ij}=1 \text{ if } i = j \text{ else } 0 \text{)} \\
    &= \Tilde{\epsilon}^2_i - \mu_{\epsilon^2} \text{ (Since } \frac{1}{N} \sum_{j=1}^N \Tilde{\epsilon}^2_i = \mu_{\epsilon^2} \text{)}.
\end{align*}
Also, 
\begin{align*}
    \frac{\partial B}{\partial \Tilde{\sigma}^2_i} &= \frac{\partial}{\partial \Tilde{\sigma}^2_i} \sqrt{\sum_{j=1}^N\left(\Tilde{\sigma}^2_j-\mu_{\sigma^2}\right)^2} \sqrt{\sum_{j=1}^N\left(\Tilde{\epsilon}^2_j-\mu_{\epsilon^2}\right)^2}
\end{align*}
Denoting $\sigma_{\Tilde{\sigma}^2} := \sum_{j=1}^N\left(\Tilde{\sigma}^2_j-\mu_{\sigma^2}\right)^2$, $\sigma_{\Tilde{\epsilon}^2} := \sum_{j=1}^N\left(\Tilde{\epsilon}^2_j-\mu_{\epsilon^2}\right)^2$, we have:
\begin{align*}
    \frac{\partial B}{\partial \Tilde{\sigma}^2_i} &= \sigma_{\Tilde{\epsilon}^2} \frac{1}{2\sigma_{\Tilde{\sigma}^2}} \frac{\partial}{\partial \Tilde{\sigma}^2_i} \sum_{j=1}^N\left(\Tilde{\sigma}^2_j-\mu_{\sigma^2}\right)^2 \\
    &= \sigma_{\Tilde{\epsilon}^2} \frac{1}{2\sigma_{\Tilde{\sigma}^2}} \left[\sum_{j=1}^N 2(\Tilde{\sigma}^2_j-\mu_{\sigma^2})  (\delta_{ij} - \frac{1}{N}) \right] \\
    &= \sigma_{\Tilde{\epsilon}^2} \frac{1}{2\sigma_{\Tilde{\sigma}^2}} \left[2(\Tilde{\sigma}^2_i-\mu_{\sigma^2}) - \frac{2}{N} \sum_{j=1}^N (\Tilde{\sigma}^2_j-\mu_{\sigma^2}) \right] \\
    &= \sigma_{\Tilde{\epsilon}^2} \sigma_{\Tilde{\sigma}^2} \frac{\Tilde{\sigma}^2_i-\mu_{\sigma^2}}{\sigma_{\Tilde{\sigma}^2}^2}
\end{align*}

Assembling the results, we have:
\begin{align*}
    \frac{\partial \mathcal{L}_{PCC}(\Tilde{y}, \Tilde{\sigma}^2)}{\partial \Tilde{\sigma}^2_i} &= - \frac{\partial r(\Tilde{y}, \Tilde{\sigma}^2)}{\partial \Tilde{\sigma}^2_i} = - \frac{1}{B}\frac{\partial A}{\partial \Tilde{\sigma}^2_i} + \frac{A}{B^2}\frac{\partial B}{\partial \Tilde{\sigma}^2_i} \\
    &= - \frac{\Tilde{\epsilon}^2_i - \mu_{\epsilon^2}}{\sigma_{\Tilde{\epsilon}^2} \sigma_{\Tilde{\sigma}^2}} + \frac{\Tilde{\sigma}^2_i-\mu_{\sigma^2}}{\sigma_{\Tilde{\sigma}^2}^2} * \frac{\sum_{j=1}^N\left(\Tilde{\sigma}^2_j-\mu_{\sigma^2}\right)\left(\Tilde{\epsilon}^2_j-\mu_{\epsilon^2}\right)}{\sigma_{\Tilde{\epsilon}^2} \sigma_{\Tilde{\sigma}^2}} \\
    &= - \frac{\Tilde{\epsilon}^2_i - \mu_{\epsilon^2}}{\sigma_{\Tilde{\epsilon}^2} \sigma_{\Tilde{\sigma}^2}} + \frac{\Tilde{\sigma}^2_i-\mu_{\sigma^2}}{\sigma_{\Tilde{\sigma}^2}^2} * r(\Tilde{\sigma}^2,\Tilde{\epsilon}^2) \\
    &= \frac{1}{\sigma_{\Tilde{\sigma}^2}} \left[ 
     \frac{\Tilde{\sigma}^2_i-\mu_{\sigma^2}}{\sigma_{\Tilde{\sigma}^2}} * r(\Tilde{\sigma}^2,\Tilde{\epsilon}^2) - \frac{\Tilde{\epsilon}^2_i - \mu_{\epsilon^2}}{\sigma_{\Tilde{\epsilon}^2}}\right] \\
     &= \frac{1}{\sigma_{\Tilde{\sigma}^2}} \left[ 
      \sigma_{\Tilde{\sigma}^2} * r(\Tilde{\sigma}^2,\Tilde{\epsilon}^2) -\sigma_{\Tilde{\epsilon}^2}\right].
\end{align*}

This last result suggest the gradient $\frac{\partial \mathcal{L}_{PCC}(\Tilde{y}, \Tilde{\sigma}^2)}{\partial \Tilde{\sigma}^2_i}$ involves all standardized variables, which are within a manageable numerical range, reducing the risk of numerical instability. In addition, there is no divisions by $\Tilde{\sigma}^2_i$, hence stabilize the training process even in the event when $\Tilde{\epsilon}^2_i \rightarrow 0$.

\subsubsection{Reconstruction Modules.}
\label{sec:app_reconstruct}
SURE involves a set of reconstruction modules to best leverage the pretrained models' weights. Each reconstruction module is tailored for a specific modality, hence this reconstruction logic is linearly scale with the total number of modalities.

\textbf{Design.} While not mentioned in SURE logic, it should be noted that all $z^j$ are first linearly projected into a shared latent space wherever needed, before passing to the reconstruction modules. This step involves a single matrix multiplication done per modality, and the learnable matrix is trained together with the reconstruction modules. With that, all $r^i(.)$'s are working with the same input latent space, we unify the design of $r^i(.)$'s to be identical across different modalities. Specifically,  the design of reconstruction module $r^i(.)$ is kept as simple as possible, with the major component as Fully Connected layers and ReLU activations as follow:
\begin{equation}
\begin{aligned}
    r^i_{share}(z^j) &= FC(ReLU(FC(z^j))), \\
    r^i_{\mu}(z^j) &= FC(ReLU(FC(ReLU(r^i_{share}(z^j))), \\
    r^i_{\sigma}(z^j) &= SoftPlus(FC(ReLU(FC(ReLU(r^i_{share}(z^j) || r^i_{\mu}(z^j))).
\end{aligned}
\label{eq:reconstruction}
\end{equation}
In Equation \ref{eq:reconstruction}, $||$ denotes the concatenation operation, and $SolfPlus()$ activation is used to ensure the positiveness of returned uncertainty. 

\textbf{Complexity.} Below, we analyze the complexity of the chosen reconstruction modules. Table \ref{tab:hyper_analysis} lists hyper-parameters involved in the analysis.
\begin{table}[ht]
\centering
\caption{$r^i(.)$ related hyper-parameters}
\label{tab:hyper_analysis}
\begin{tabular}{@{}ll@{}}
\toprule
Notation & Description                             \\ \midrule
$M$        & number of modalities                    \\
$L$        & number of FC layers (in total)           \\
$d_i$      & hidden dimension of $i^th$ layer's output \\
$d_0$      & input dimension                         \\ \bottomrule
\end{tabular}
\end{table}

\textbf{Time Complexity.} Assume a single multiplication or summation operation can be performed in unit time ($\mathcal{O}(1)$). We have the calculation for number of operations in a forward pass as follows.
$$
\begin{aligned}
\text{Within the $i^th$ FC layer:} \\
&d_{i-1}*d_i + di, \\
\text{Over $L$ layers:} \\
&\sum_{i=1}^L d_{i-1}*d_i + di. \\
\end{aligned}
$$
In our implementations, we choose the same dimensions for all hidden outputs (same $d=d_i \forall i=1,\dots, L$), and there are $M$ modules $r^i(.)$. With this, the total number of operation is:
$$
M\sum_{i=1}^L d_{i-1}*d_i + di = M*L*d*(d + 1)
=\mathcal{O}(M*L*d^2)
$$
By utilizing matrix product and GPU acceleration, $d^2$ operations can in fact be performed in $\mathcal{O}(1)$ time, make the whole time complexity for individual branches be $\mathcal{O}(M*L)$, which is linearly scaled with $M$.

\textbf{Space Complexity.} Regarding the space complexity, within $i^{th}$ layer, beside the need for storing parameter matrix of size $(d_{i-1} + 1)\times d_i$, output after performing $ReLU$ activation are also stored to later perform back-propagation. Hence, the total number of stored parameters is:
$$(d_{i-1} + 1)*d_i + d_i = (d_{i-1} + 2)*d_i.$$
Following similar derivation with $L$ layers and $M$ branches, replacing $d=d_i \forall i=1,\dots, L$, we have the total space complexity is:
$$M*L*(d + 2)*d = \mathcal{O}(M*L*d^2).$$

Despite utilizing straightforward reconstruction procedure, SURE demonstrates effective reconstruction in the latent space while maintaining an overall additional time and space complexity linearly scaled with $M$ - the number of all modalities and $L$ - the number of FC layers ($6$ in our implementation including both reconstruction and uncertainty heads).

\subsubsection{Extension to M modalities.}
\label{sec:app_m_modals}
For extension to $M$ modalities, we train the reconstruction module using $\mathcal{L}_{rec}$. We use each of the available modalities as the ground-truth output and the rest available modalities as input to predict. 
In the second phase, we freeze all of the reconstruction modules and train the classifier head with $\mathcal{L}_{downstream}$. For each sample with missing modalities, we reconstruct them with remaining available ones, and perform simple average operation to obtain the final reconstruction. 
Algorithm \ref{alg:training} summarize the whole training process of SURE for $m \ge 2$ modalities.
\begin{algorithm}[ht]
  \caption{SURE training process} \label{alg:training}
  \textbf{Input}: \\
    $\triangleright \hspace{0.2cm} \mathcal{D}_{train} = \{(\mathbf{x}^i_k); \mathbf{y}_k | i \in \mathcal{I}_k - \text{set of indices for available modalities in sample } k^{th}\}$. \\
    $\triangleright \hspace{0.2cm} f^i(.)$ - frozen pretrained projectors; $r^i(.)$ - reconstruction modules $(i=1, \dots, M)$.\\
    $\triangleright \hspace{0.2cm} \omega(.)$ - frozen pretrained fusion module; $g(.)$ - classifier head. \\
    \textbf{Output}: \\
    $\triangleright \hspace{0.2cm} r^{i*}(.)$ - Trained reconstruction modules; $g^{*}(.)$ - Trained classifier head $(i=1, \dots, M)$. \\
  \begin{algorithmic}[1]
    \begin{minipage}[t]{0.4\linewidth}%
    \raggedright
      \myState{\textit{Initialize} $r^{i}(.)$'s and $g(.)$} \\
      \Comment{Train reconstruction modules} 
      \For{mini-batch $\mathcal{B} \in \mathcal{D}_{train}$} \\
        \myState{$l_{rec} \leftarrow 0;$} \\
        \For{$i \in \{1,\dots,M\}$} \\
            \myState{$l_{rec}^i \leftarrow 0;$} \\
            \For{$j \in \{1,\dots,M\}; j \neq i$} \\
                \myState{$\mathbf{z}^i_k = f^i(\mathbf{x}^i_k) \quad (\forall k: i \in \mathcal{I}_k)$;}
                \myState{$\Tilde{\mathbf{z}}^i_k, \Tilde{\sigma}^i_k \leftarrow r^i(\mathbf{z}^j_k) \quad (\forall k: i, j \in \mathcal{I}_k)$;}
                \myState{$l_{rec}^i \leftarrow l_{rec}^i + \mathcal{L}_{rec}(\mathbf{z}_i; \mathbf{z}_j)$;} \\
            \EndFor
            \myState{$l_{rec} \leftarrow l_{rec} + l_{rec}^i;$}
        \EndFor
        \myState{Backprop with $l_{rec}$;} 
        \myState{Optimizer step;}
  \EndFor
  \end{minipage}\hfill%
  \begin{minipage}[t]{0.48\linewidth}%
  \raggedright
    \myState{Freeze reconstructed modules $r^i(.)$;} \\
    \Comment{Train classifier head} 
      \For{mini-batch $\mathcal{B} \in \mathcal{D}_{train}$} \\
        \myState{$\mathbf{z}^i_k = f^i(\mathbf{x}^i_k) \quad (\forall i: i \in \mathcal{I}_k)$;}
        \myState{For $\forall i,j; i \notin \mathcal{I}_k, j \in \mathcal{I}_k$:}
         \myState{$\quad \Tilde{\mathbf{z}}^i_{j-k}, \Tilde{\sigma}^i_{j-k} = r^i(\mathbf{x}^j_k)$);}
        \myState{$\quad \Tilde{\mathbf{z}}^i_k = \texttt{average}(\Tilde{\mathbf{z}}^i_{j-k})$);}
        \myState{$\quad \Tilde{\sigma}_{\Tilde{z}_i}^2 = \texttt{average}(\Tilde{\sigma}^i_{j-k})$;}
        \myState{$\quad \Tilde{\mathbf{y}}_k, \Tilde{\sigma}_{\omega-k} \leftarrow g(\omega(\mathbf{z}^i_k, \Tilde{\mathbf{z}}^j_k))$}
        \myState{$\Tilde{\sigma}_{input-k} \leftarrow \sum_{i \notin \mathcal{I}_k}\left(\frac{\partial \omega}{\partial \Tilde{z}^i_k}\right)^2 \Tilde{\sigma}_{\Tilde{z}_i}^2$;}
        \myState{$\Tilde{\sigma}_{\Tilde{y}_k}^2 \leftarrow \Tilde{\sigma}_{input-k} + \Tilde{\sigma}_{\omega-k}$;}
        \myState{$l_{downstream} \leftarrow \mathcal{L}_{downstrean}(\Tilde{\mathbf{y}}_k; \mathbf{y}_k)$;}
        \myState{$l_{y-pcc} \leftarrow \mathcal{L}_{PCC}(\Tilde{\sigma}_{\Tilde{y}_k}^2; l_{downstream})$;}
        \myState{Backprop with $l_{y-pcc}$ and $l_{downstream}$; Optimizer step;}
  \EndFor  
  \end{minipage}
  \end{algorithmic}
\end{algorithm}

\subsubsection{Additional Implementation Details}
\label{sec:app_settings}
\textbf{SURE's Implementation Details.} 
In SURE, we reutilize pretrained multimodal frameworks chosen for specific tasks. The only replacement is the final layers producing prediction, since the classification task might involve different number of classes, and there is an additional output head for estimation of output uncertainty. 

\textbf{Sentiment Analysis.} This task involves predicting the polarity of input data (e.g., video, transcript). We use MMML \cite{mmml} trained on the CMU-MOSEI dataset \cite{cmu_mosei} as the pretrained framework. SURE's reconstruction modules are added right after the projection modules - \emph{Text/Audio feature networks} in original paper's language \cite{mmml}. Their fusion network are kept intact to leverage most pretrained weights as possible. We replace the last fully connected layer - classifier with two layers - one for the final output and one for estimated output uncertainty.

\textbf{Book genre classification.} This task involves classifying book genres based on their titles, summaries (text), and covers (images). We integrate SURE with MMBT \cite{mmbt}, a pretrained framework on the MM-IMDB dataset \cite{mm_imdb}. MMBT is a bitransformer architecture, hence we consider the all the processing before positional embedding and segment embedding as the projection logic (refer to \cite{mmbt} for clearer architecture details), and add our reconstruction modules are inseted after this logic. The remaining transformer logic are considered fusion modules, and kept intact.

\textbf{Human Action Recognition.} This task involves identifying human actions based on recorded videos and sensor data. We use HAMLET framework \cite{hamlet}, pretrained on the large-scale MMAct dataset \cite{mmact_dts} for this task. HAMLET define their projection modules as \emph{Unimodal Feature Encoders} \cite{hamlet}. SURE's reconstruction modules are included right after these encoders, while retain their original MAT module.

\textbf{Baselines' Implementation Details.} 
In our comparative evaluation, we incorporate several state-of-the-art approaches, each representing prominent strategies. The baselines are grouped into two categories, reflecting the key challenges addressed by SURE: (1) Reconstruction methods for missing modalities, and (2) Uncertainty estimation methods. The reconstruction techniques include ActionMAE \cite{actionmae}, DiCMoR \cite{dicmor}, and IMDer \cite{imder}. For uncertainty estimation, we evaluate against the Gaussian Maximum Likelihood method \cite{g_mle_1, g_mle_2}, Monte Carlo Dropout \cite{do_1, do_2, do_3}, and Ensemble Learning \cite{deepensemble}. The original codebases of all baseline implementations are used for best reproducibility.

For all baselines, we also reutilize pretrained multimodal frameworks chosen for specific tasks like the adoptation with SURE. 
In addition, the hidden dimension used within Reconstruction-based baselines are also modified to be the same as those used within SURE for fair comparison.

\subsection{Environment Settings}
\label{sec:app_compute}
All implementations and experiments are performed on a single machine with the following hardware setup: a 6-core Intel Xeon CPU and two NVIDIA A100 GPUs for accelerated training.

Our codebase is primarily built using \emph{PyTorch 2.0}, incorporating \emph{Pytorch-AutoGrad} for deep learning model development and computations. We also use tools from \emph{Scikit-learn}, \emph{Pandas}, and \emph{Matplotlib} to support various experimental functionalities. The original codebase for SURE will be released publicly upon publication.

\subsection{Additional Experiments and Analyses}
\label{sec:app_exps}
\subsubsection{Extended modalities missing scenarios}
\label{sec:app_missing}
\begin{table}[th]
\centering
\caption{Results of different approaches on UTD-MHAD Dataset given every possible combination of input modalities.}
\label{tab:utd_mhad_full}
\resizebox{0.8\textwidth}{!}{%
\begin{tabular}{@{}lllcccc@{}}
\toprule
                                          & \textbf{Model}                                                                   &               & \textbf{F1}                           & \textbf{Acc}                          & \textbf{\begin{tabular}[c]{@{}c@{}}Reconstruct \\ Uncertainty Corr\end{tabular}} & \textbf{\begin{tabular}[c]{@{}c@{}}Output \\ Uncertainty Corr\end{tabular}} \\ \midrule
                                          &                                                                                  & Video         & 0.044                                 & 0.059                                 & -                                                                                & -                                                                           \\
                                          &                                                                                  & Accel         & 0.204                                 & 0.231                                 & -                                                                                & -                                                                           \\
                                          &                                                                                  & Gyro          & 0.303                                 & 0.311                                 & -                                                                                & -                                                                           \\
                                          &                                                                                  & Video + Accel & 0.034                                 & 0.085                                 & -                                                                                & -                                                                           \\
                                          &                                                                                  & Video + Gyro  & 0.301                                 & 0.305                                 & -                                                                                & -                                                                           \\
                                          &                                                                                  & Accel + Gyro  & 0.31                                  & 0.306                                 & -                                                                                & -                                                                           \\
                                          & \multirow{-7}{*}{ActionMAE}                                                      & Full          & 0.531                                 & 0.537                                 & -                                                                                & -                                                                           \\ \cmidrule(l){2-7} 
                                          &                                                                                  & Video         & 0.069                                 & 0.033                                 & -                                                                                & -                                                                           \\
                                          &                                                                                  & Watch Accel   & 0.473                                 & 0.408                                 & -                                                                                & -                                                                           \\
                                          &                                                                                  & Phone Gyro    & 0.52                                  & 0.472                                 & -                                                                                & -                                                                           \\
                                          &                                                                                  & Video + Accel & 0.524                                 & 0.449                                 & -                                                                                & -                                                                           \\
                                          &                                                                                  & Video + Gyro  & 0.536                                 & 0.553                                 & -                                                                                & -                                                                           \\
                                          &                                                                                  & Accel + Gyro  & 0.577                                 & 0.586                                 & -                                                                                & -                                                                           \\
                                          & \multirow{-7}{*}{DiCMoR}                                                         & Full          & 0.653                                 & 0.636                                 & -                                                                                & -                                                                           \\ \cmidrule(l){2-7} 
                                          &                                                                                  & Video         & 0.089                                 & 0.069                                 & -                                                                                & -                                                                           \\
                                          &                                                                                  & Watch Accel   & 0.157                                 & 0.158                                 & -                                                                                & -                                                                           \\
                                          &                                                                                  & Phone Gyro    & 0.141                                 & 0.145                                 & -                                                                                & -                                                                           \\
                                          &                                                                                  & Video + Accel & 0.152                                 & 0.152                                 & -                                                                                & -                                                                           \\
                                          &                                                                                  & Video + Gyro  & 0.248                                 & 0.257                                 & -                                                                                & -                                                                           \\
                                          &                                                                                  & Accel + Gyro  & 0.316                                 & 0.278                                 & -                                                                                & -                                                                           \\
\multirow{-21}{*}{\rotatebox[origin=c]{90}{Modal Reconstruction}}   & \multirow{-7}{*}{IMDer}                                                          & Full          & 0.687                                 & 0.689                                 & -                                                                                & -                                                                           \\ \midrule
                                          &                                                                                  & Video         & 0.116                                 & 0.074                                 & 0.166                                                                            & 0.122                                                                       \\
                                          &                                                                                  & Watch Accel   & 0.433                                 & 0.381                                 & 0.115                                                                            & 0.476                                                                       \\
                                          &                                                                                  & Phone Gyro    & 0.468                                 & 0.387                                 & 0.056                                                                            & 0.147                                                                       \\
                                          &                                                                                  & Video + Accel & 0.432                                 & 0.443                                 & 0.104                                                                            & 0.237                                                                       \\
                                          &                                                                                  & Video + Gyro  & 0.462                                 & 0.502                                 & 0.095                                                                            & 0.143                                                                       \\
                                          &                                                                                  & Accel + Gyro  & 0.639                                 & 0.67                                  & 0.242                                                                            & 0.29                                                                        \\
                                          & \multirow{-7}{*}{\begin{tabular}[c]{@{}l@{}}SURE + \\ Gaussian MLE\end{tabular}} & Full          & 0.693                                 & 0.651                                 & -                                                                                & 0.292                                                                       \\ \cmidrule(l){2-7} 
                                          &                                                                                  & Video         & 0.156                                 & 0.09                                  & 0.122                                                                            & 0.136                                                                       \\
                                          &                                                                                  & Watch Accel   & 0.473                                 & 0.404                                 & 0.135                                                                            & 0.486                                                                       \\
                                          &                                                                                  & Phone Gyro    & 0.595                                 & 0.571                                 & 0.171                                                                            & 0.223                                                                       \\
                                          &                                                                                  & Video + Accel & 0.452                                 & 0.52                                  & 0.186                                                                            & 0.292                                                                       \\
                                          &                                                                                  & Video + Gyro  & 0.546                                 & 0.56                                  & 0.101                                                                            & 0.376                                                                       \\
                                          &                                                                                  & Accel + Gyro  & 0.618                                 & 0.639                                 & 0.201                                                                            & 0.417                                                                       \\
                                          & \multirow{-7}{*}{\begin{tabular}[c]{@{}l@{}}SURE + \\ MC DropOut\end{tabular}}   & Full          & 0.739                                 & 0.718                                 & -                                                                                & 0.512                                                                       \\ \cmidrule(l){2-7} 
                                          &                                                                                  & Video         & 0.25                                  & 0.207                                 & 0.249                                                                            & 0.126                                                                       \\
                                          &                                                                                  & Watch Accel   & 0.468                                 & 0.453                                 & 0.175                                                                            & 0.421                                                                       \\
                                          &                                                                                  & Phone Gyro    & 0.593                                 & 0.604                                 & 0.122                                                                            & 0.436                                                                       \\
                                          &                                                                                  & Video + Accel & 0.652                                 & 0.662                                 & 0.092                                                                            & 0.346                                                                       \\
                                          &                                                                                  & Video + Gyro  & {\color[HTML]{0000FF} 0.776}          & {\color[HTML]{0000FF} 0.781}          & 0.176                                                                            & 0.462                                                                       \\
                                          &                                                                                  & Accel + Gyro  & {\color[HTML]{FF0000} \textbf{0.839}} & {\color[HTML]{FF0000} \textbf{0.843}} & 0.278                                                                            & 0.486                                                                       \\
\multirow{-21}{*}{\rotatebox[origin=c]{90}{Uncertainty Estimation}} & \multirow{-7}{*}{\begin{tabular}[c]{@{}l@{}}SURE + \\ DeepEnsemble\end{tabular}} & Full          & 0.737                                 & 0.735                                 & -                                                                                & 0.481                                                                       \\ \midrule
                                          &                                                                                  & Video         & 0.161                                 & 0.121                                 & {\color[HTML]{FF0000} \textbf{0.878}}                                            & 0.226                                                                       \\
                                          &                                                                                  & Watch Accel   & 0.462                                 & 0.431                                 & 0.837                                                                            & {\color[HTML]{0000FF} 0.53}                                                 \\
                                          &                                                                                  & Phone Gyro    & 0.607                                 & 0.59                                  & 0.863                                                                            & 0.306                                                                       \\
                                          &                                                                                  & Video + Accel & 0.542                                 & 0.606                                 & {\color[HTML]{0000FF} 0.873}                                                     & 0.412                                                                       \\
                                          &                                                                                  & Video + Gyro  & 0.609                                 & 0.637                                 & 0.862                                                                            & 0.379                                                                       \\
                                          &                                                                                  & Accel + Gyro  & 0.679                                 & 0.706                                 & 0.455                                                                            & 0.51                                                                        \\
                                          & \multirow{-7}{*}{\textbf{SURE}}                                                  & Full          & 0.739                                 & 0.74                                  & -                                                                                & {\color[HTML]{FF0000} \textbf{0.568}}                                       \\ \bottomrule
\end{tabular}
}
\end{table}
In Table \ref{tab:utd_mhad_full}, we provide a comprehensive evaluation of different frameworks across all combinations of input modalities on the UTD-MHAD dataset. This table expands on the information presented in Table \ref{tab:utd_mhad} in the main text. The reported reconstruction uncertainty for cases with more than one available modality is averaged over all missing modalities (e.g., given (Video + Accel) inputs, the reported reconstruction uncertainty represents the average value for Gyro reconstruction). The results show that SURE consistently delivers the best performance in most uncertainty estimation scenarios while maintaining competitive results for the downstream task, underscoring its robustness across different missing modality situations.

\subsubsection{Extend analysis for inter-relationship between estimated uncertainties and error.}
\label{sec:app_inter_relationship}

\begin{figure}[!t]
    \begin{subfigure}[b]{0.32\textwidth}
    \centering
    \includegraphics[width=\textwidth]{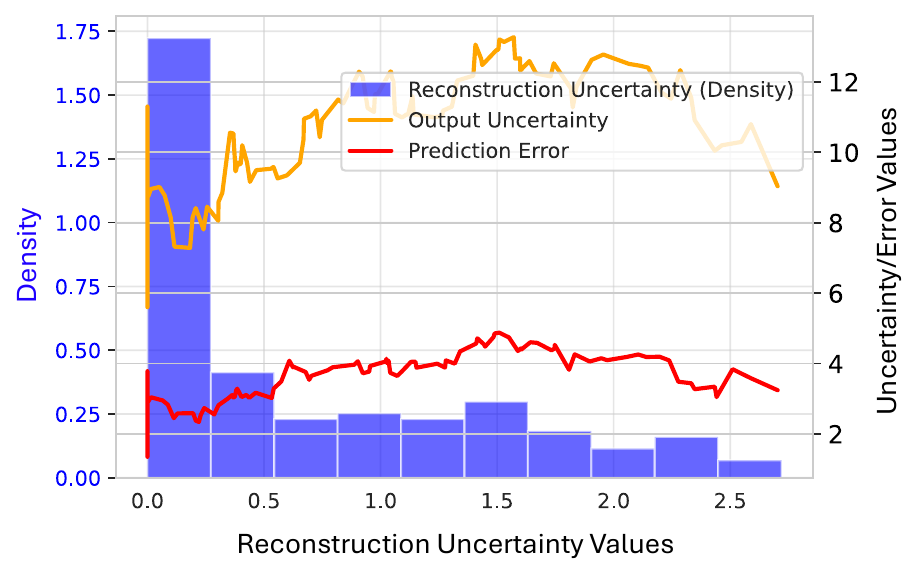}
    \caption{Mod. 0 is missing}
    \end{subfigure}
    \hfill
    \begin{subfigure}[b]{0.32\textwidth}
    \centering
    \includegraphics[width=\textwidth]{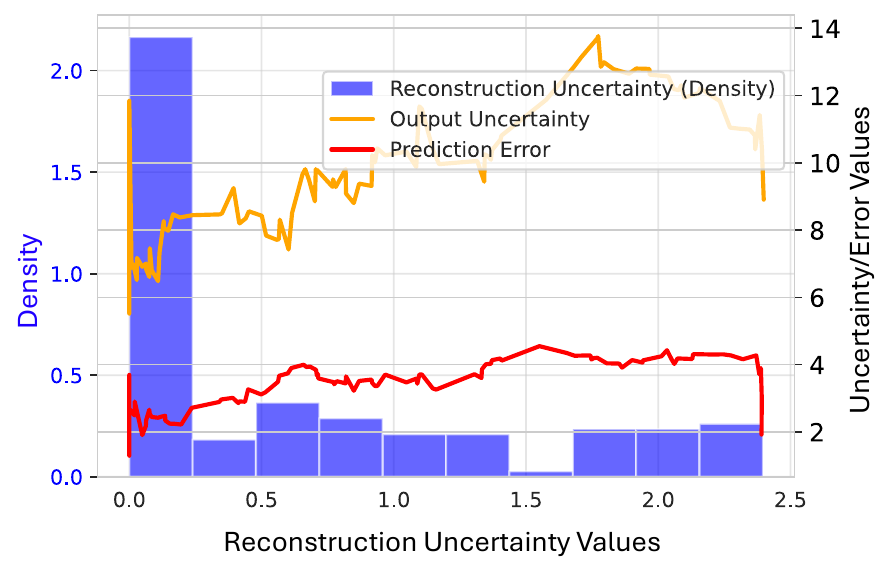}
    \caption{Mod. 1 is missing}
    \end{subfigure}
    \hfill
    \begin{subfigure}[b]{0.32\textwidth}
    \centering
    \includegraphics[width=\textwidth]{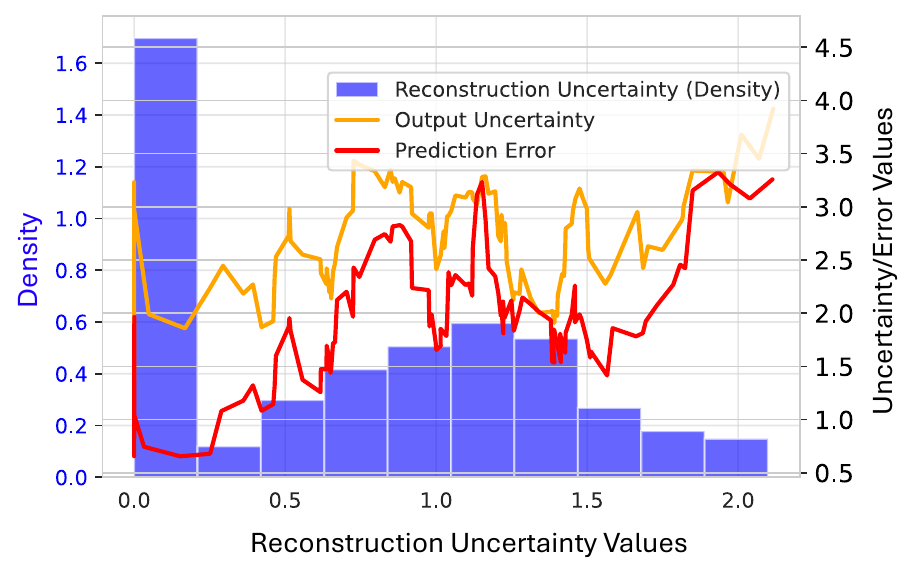}
    \caption{Mod. 2 is missing}
    \end{subfigure}
    \caption{
    Inter-relationship between estimated output uncertainty, reconstruction uncertainty and output error on UTD-MHAD test dataset.
    }
    \label{fig:missing_analysis_error_out_in}
\end{figure}

We further analyze Figure \ref{fig:missing_analysis_error_out_in} to explore the inter-relationship between prediction error, reconstruction uncertainty, and output uncertainty. This visualization includes all test samples from the UTD-MHAD dataset under different modality-missing scenarios, where each modality is systematically excluded. The histogram represents reconstruction uncertainty, while two line plots illustrate output uncertainty and prediction error.

Ideally, the two line plots should display an increasing trend, indicating a positive correlation with reconstruction uncertainty. Using SURE, such a trend is partially observed when modalities 0 and 1 are missing; however, it is less evident when modality 2 is missing. When combined with downstream performance for different input modality combinations, this experiment reveals two key insights:
\begin{itemize}
    \item The Importance of Strong Modalities: Modality 2 (Gyro) plays a critical role in both downstream task performance and reconstructing other modalities. This suggests that stronger modalities are more effective in compensating for or reconstructing missing inputs to solve particular downstream tasks.
    \item Correlations Between Quantities: Output uncertainty is strongly correlated with prediction error, whereas reconstruction uncertainty shows a weaker correlation. This result aligns with our design: output uncertainty is directly trained to reflect prediction error, while reconstruction uncertainty may not be the primary source of error—model limitations and other factors can also contribute significantly.
\end{itemize}
These observations highlight the nuanced dynamics between uncertainties and error quantities produced with SURE. 

\subsubsection{Extend Decision Making Analysis}
\label{sec:app_decision_making}

\begin{figure}[th]
    \centering
    \begin{subfigure}[b]{0.45\textwidth}
        \centering
        \includegraphics[width=\textwidth]{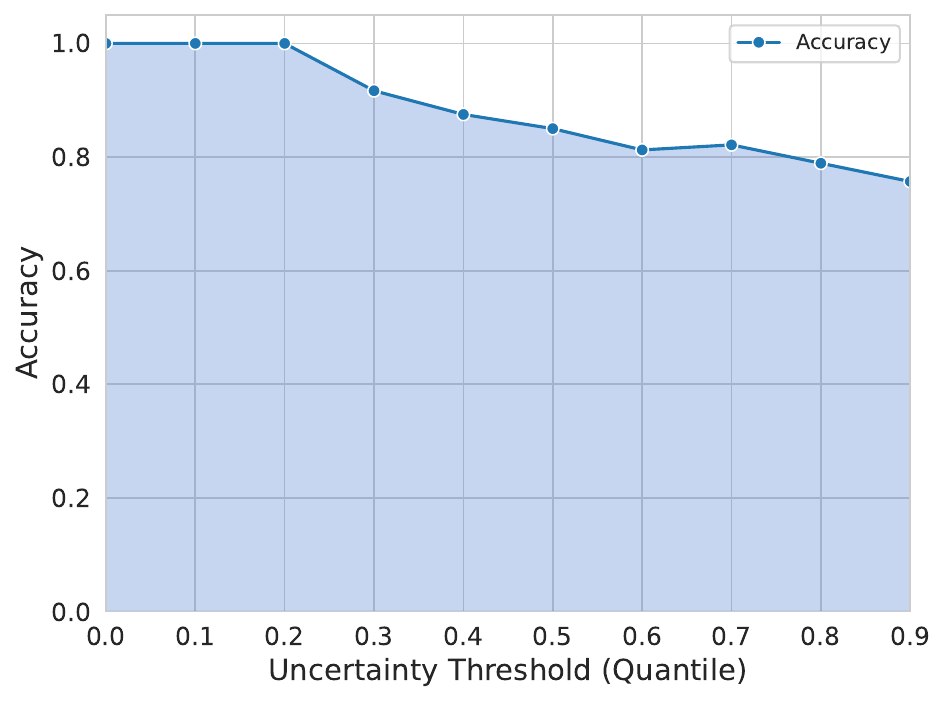}
        \caption{Accuracy-Uncertainty}
        \end{subfigure}
        \hspace{1em}
    \begin{subfigure}[b]{0.45\textwidth}
        \centering
        \includegraphics[width=\textwidth]{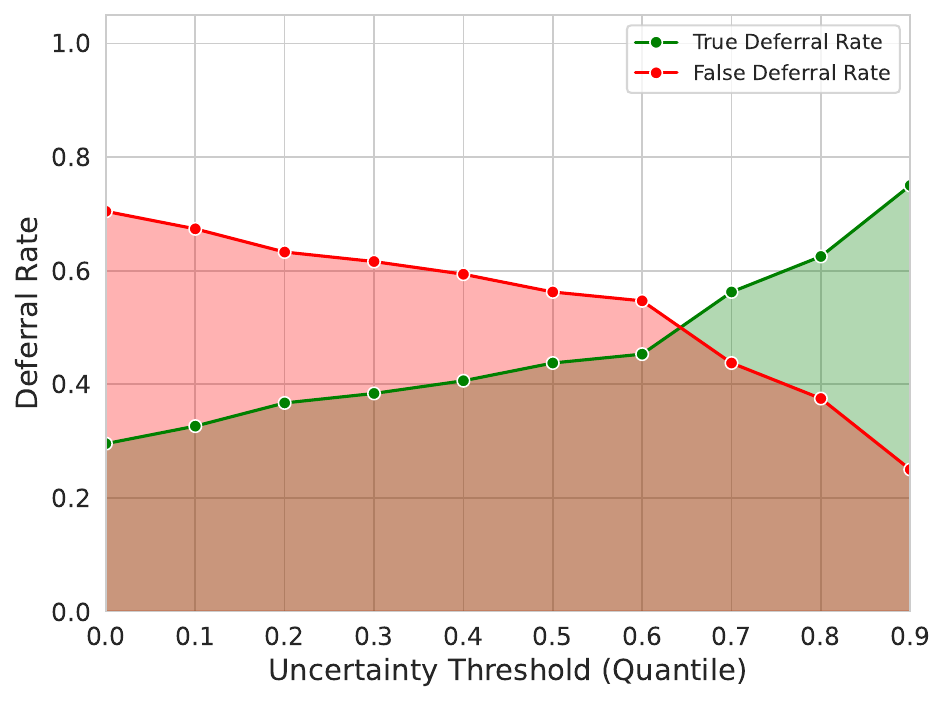}
        \caption{TDR/FDR-Uncertainty}
        \end{subfigure}
    \caption{Decision Making with uncertainty when Video is missing}
    \label{fig:decision5}
\end{figure}

\begin{figure}[th]
    \centering
    \begin{subfigure}[b]{0.45\textwidth}
        \centering
        \includegraphics[width=\textwidth]{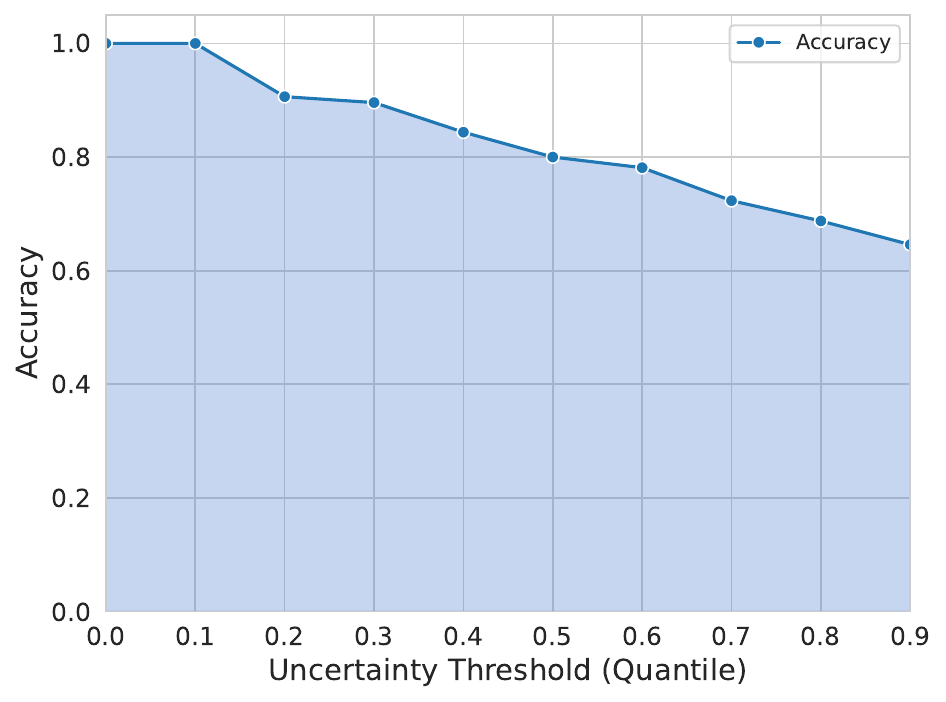}
        \caption{Accuracy-Uncertainty}
        \end{subfigure}
        \hspace{1em}
    \begin{subfigure}[b]{0.45\textwidth}
        \centering
        \includegraphics[width=\textwidth]{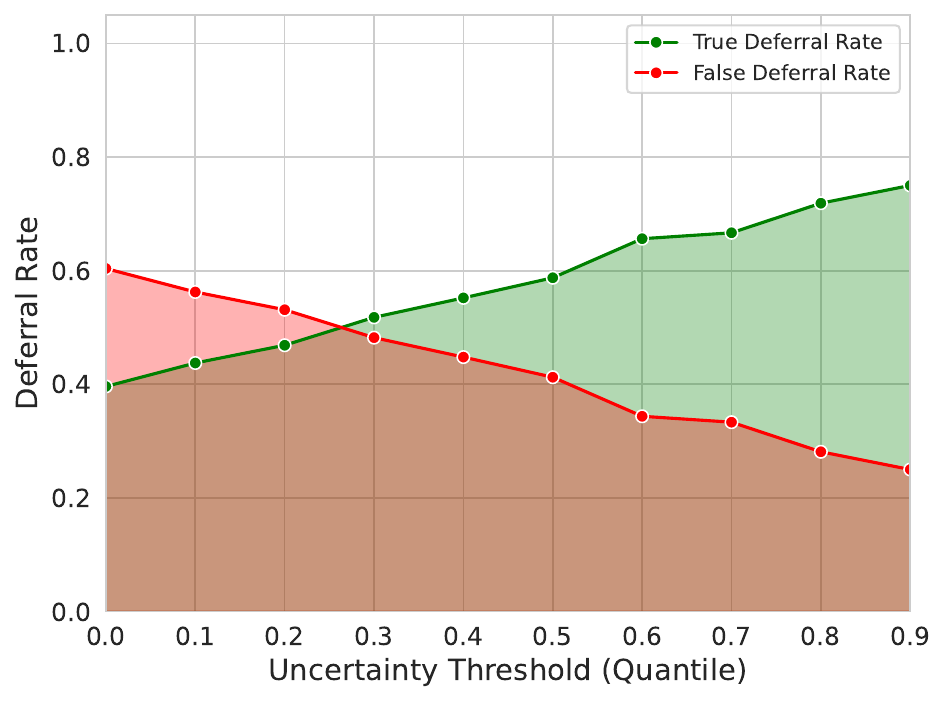}
        \caption{TDR/FDR-Uncertainty}
        \end{subfigure}
    \caption{Decision Making with uncertainty when Accel is missing}
    \label{fig:decision3}
\end{figure}

\begin{figure}[th]
    \centering
    \begin{subfigure}[b]{0.45\textwidth}
        \centering
        \includegraphics[width=\textwidth]{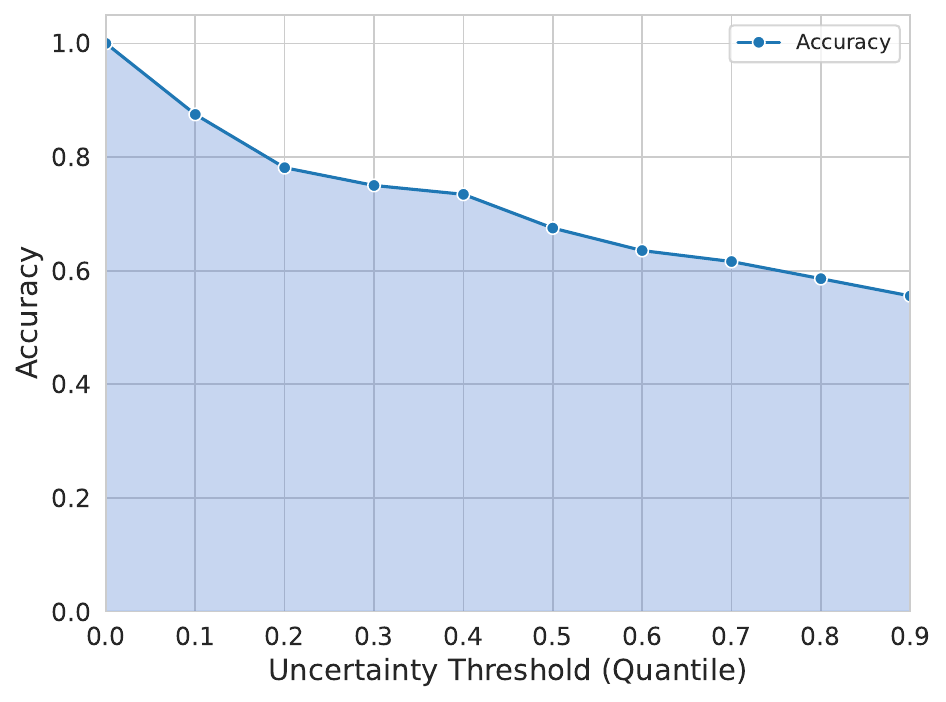}
        \caption{Accuracy-Uncertainty}
        \end{subfigure}
        \hspace{1em}
    \begin{subfigure}[b]{0.45\textwidth}
        \centering
        \includegraphics[width=\textwidth]{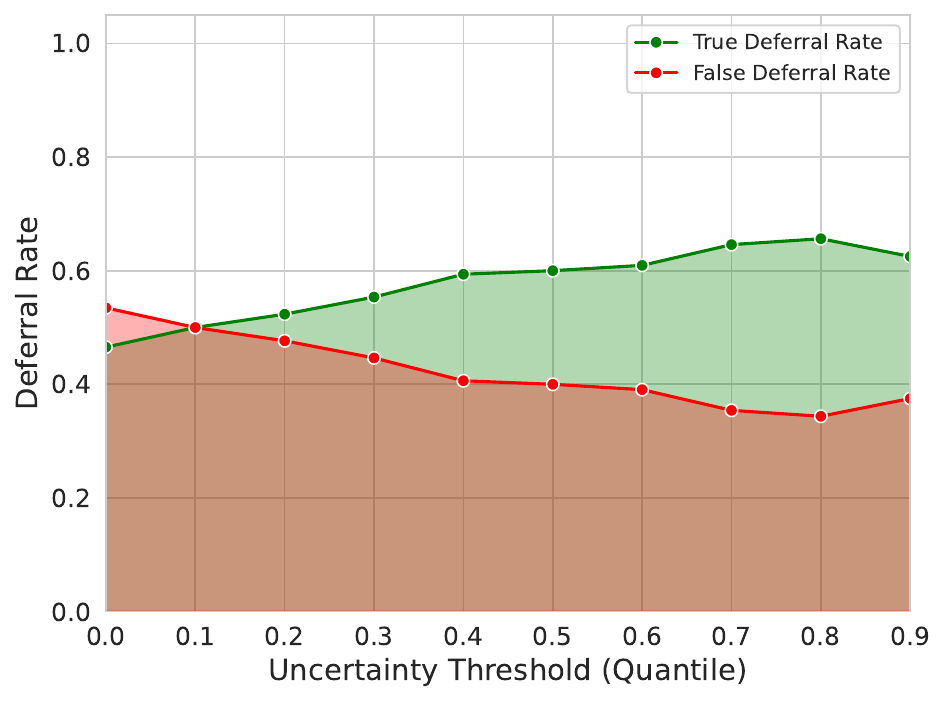}
        \caption{TDR/FDR-Uncertainty}
        \end{subfigure}
    \caption{Decision Making with uncertainty when Gyro is missing}
    \label{fig:decision4}
\end{figure}

Building on the main text analysis, we simulate the decision-making process on the UTD-MHAD dataset under conditions where different modalities are missing (Figures \ref{fig:decision5}, \ref{fig:decision3}, \ref{fig:decision4}). Each figure represents the inference scenarios when the Video, Accel, or Gyro modality is absent. Similar to the decision-making process with full modalities, incorporating uncertainty estimates in cases with missing modalities continues to guide a reliable decision-making process by adjusting different uncertainty thresholds.

\begin{figure}
    \centering
    \includegraphics[width=0.8\linewidth]{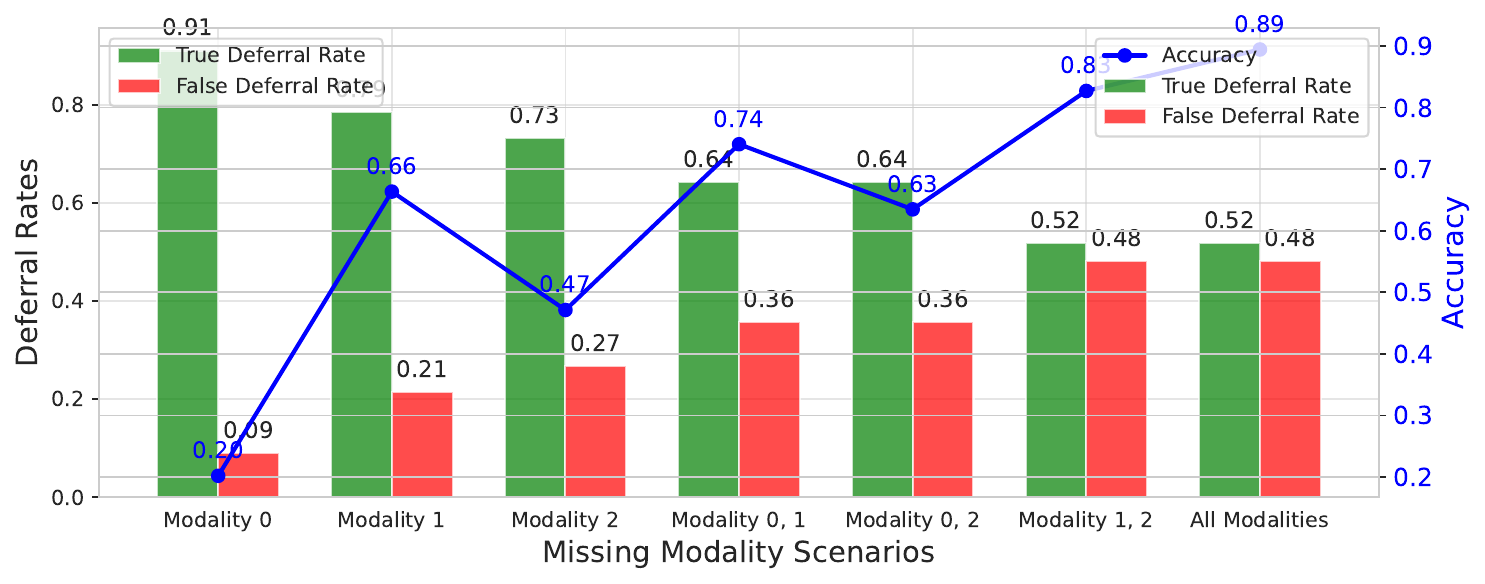}
    \caption{Decision making analysis with different input modalities combinations on UTD-MHAD dataset, with defer threshold set to be 0.65 quantile.}
    \label{fig:decision_making_aggregate}
\end{figure}
Lastly, for an aggregated perspective, the figure \ref{fig:decision_making_aggregate} shows the combined decision-making analysis on the UTD-MHAD dataset across different input modality combinations, with a deferral threshold set at the $0.65$ quantile of output uncertainty. The green bars represent the true deferral rate, which indicates the proportion of incorrect predictions successfully deferred, while the red bars indicate the false deferral rate, representing the proportion of correct predictions unnecessarily deferred. The blue line shows the accuracy of the non-deferred predictions. From this visualization, the model effectively defers incorrect predictions when modality $0$ is missing (true deferral rate $\sim 0.91$), but performance declines as more modalities are removed, with a notable drop for modality $2$ (Gyro). False deferral rates remain low but vary slightly across scenarios at the chosen threshold ($0.65$ uncertainty quantile), suggesting that the optimal uncertainty threshold may differ depending on the input modality combination. Non-deferred prediction accuracy decreases significantly when critical modalities like modality $2$ are missing, underscoring its importance for robust performance. While the deferral strategy effectively reduces errors, further optimization of uncertainty thresholds is required to adapt to varying input modalities.

\subsubsection{Additional Comparison with Prompt-based techniques}
\label{sec:app_prompt_tuning}
\begin{table}[th]
\centering
\caption{Additional results of different approaches on CMU-MOSI Dataset.}
\label{tab:mosi_extend}
\resizebox{0.9\textwidth}{!}{%
\begin{tabular}{@{}lcccccccccccc@{}}
\toprule
\textbf{Model} & \multicolumn{3}{c}{\textbf{MAE}}                                                                                      & \multicolumn{3}{c}{\textbf{Corr}}                                                                                     & \multicolumn{3}{c}{\textbf{F1}}                                                                                       & \multicolumn{3}{c}{\textbf{Acc}}                                                                                     \\ \midrule
               & T(ext)                                & A(udio)                               & F(ull)                                & T                                     & A                                     & F                                     & T                                     & A                                     & F                                     & T                                     & A                                     & F                                    \\
MPMM           & 0.683                                 & 1.197                                 & 0.668                                 & 0.83                                  & 0.495                                 & 0.834                                 & {\color[HTML]{0000FF} 0.87}           & {\color[HTML]{0000FF} 0.69}           & 0.874                                 & {\color[HTML]{0000FF} 0.871}          & {\color[HTML]{0000FF} 0.689}          & 0.875                                \\
MPLMM          & {\color[HTML]{0000FF} 0.624}          & {\color[HTML]{0000FF} 1.166}          & {\color[HTML]{0000FF} 0.607}          & {\color[HTML]{0000FF} 0.838}          & {\color[HTML]{0000FF} 0.509}          & {\color[HTML]{0000FF} 0.842}          & 0.865                                 & {\color[HTML]{FF0000} \textbf{0.697}} & {\color[HTML]{0000FF} 0.879}          & 0.865                                 & {\color[HTML]{FF0000} \textbf{0.694}} & {\color[HTML]{0000FF} 0.879}         \\
\textbf{SURE}           & {\color[HTML]{FF0000} \textbf{0.602}} & {\color[HTML]{FF0000} \textbf{1.148}} & {\color[HTML]{FF0000} \textbf{0.583}} & {\color[HTML]{FF0000} \textbf{0.865}} & {\color[HTML]{FF0000} \textbf{0.557}} & {\color[HTML]{FF0000} \textbf{0.869}} & {\color[HTML]{FF0000} \textbf{0.896}} & 0.685                                 & {\color[HTML]{FF0000} \textbf{0.891}} & {\color[HTML]{FF0000} \textbf{0.894}} & 0.684                                 & {\color[HTML]{FF0000} \textbf{0.89}} \\ \bottomrule
\end{tabular}
}
\end{table}

We further compare our SURE pipeline with two representative approaches that use prompt-based tuning techniques to address missing modalities \cite{prompt_1, prompt_2}. Similar to our work, these approaches also leverage pretrained multimodal pipelines for efficient training. Their key innovation lies in introducing trainable prompts to indicate the presence of missing modalities.

\textbf{Setting.} The chosen task for demonstration is Semantic Analysis task. In line with the CMU-MOSI experiment described in the main text, both frameworks are implemented using the MMML \cite{mmml} model, pretrained on the CMU-MOSEI dataset \cite{cmu_mosei}. To ensure a fair comparison, all core modules from the original codebases of the two approaches are preserved to accurately replicate their performance. The training dataset is designed similarly to the main experiment, with $50\%$ of modalities randomly masked and treated as missing.

\textbf{Result.} As shown in Table \ref{tab:mosi_extend}, SURE outperforms the two prompt-based approaches in handling missing modalities, achieving better downstream task performance. This advantage may stem from the limited number of learnable parameters introduced by these techniques, which likely constrain their ability to adapt effectively to scenarios with missing modalities.

\end{document}